\newcommand{\iidsim}{\overset{\text{iid}}{\sim}}
\newcommand{\dd}{\mathrm{d}}
\newcommand{\tr}{\mathrm{tr}}
\newcommand{\IIC}{\mathrm{IIC}}
\newcommand{\cond}{\mathrm{cond}}
\newcommand{\cov}{\mathrm{Cov}}
\DeclareMathOperator*{\argmin}{arg\,min}
\DeclareMathOperator*{\argmax}{arg\,max}
\newtheorem{theorem}{Theorem}
\newtheorem{lemma}{Lemma}
\newtheorem{proposition}{Proposition}
\theoremstyle{definition}
\newtheorem{definition}{Definition}
\newtheorem*{example*}{Running Example}
\newtheorem{assumption}{Assumption}
\newtheorem*{condition}{Conditions}
\newtheorem*{assumption*}{Assumption}
\newtheorem{remark}{Remark}
\newtheorem*{remark*}{Remark}
\begin{document}
\begin{frontmatter}
\title{The Interpolating Information Criterion for \\ 
Overparameterized Models 
}

\runtitle{Interpolating Information Criterion}
\runauthor{Hodgkinson et al.}

\begin{aug}

\author[A]{\fnms{Liam} \snm{Hodgkinson}\ead[label=e1]{lhodgkinson@unimelb.edu.au}},
\author[B]{\fnms{Chris}
\snm{van der Heide}\ead[label=e2]{chris.vdh@gmail.com}},
\author[C]{\fnms{Robert} \snm{Salomone}\ead[label=e3]{r.salomone@unsw.edu.au}},
\author[D]{\fnms{Fred} \snm{Roosta}\ead[label=e4]{fred.roosta@uq.edu.au}}
\and
\author[E]{\fnms{Michael W.} \snm{Mahoney}\ead[label=e5]{mahoney@stat.berkeley.edu}}

\address[A]{School of Mathematics and Statistics, University of Melbourne. \\ \printead{e1}}
\address[B]{Department of Electrical and Electronic Engineering, University of Melbourne. \\ \printead{e2}}
\address[C]{School of Mathematics and Statistics, UNSW Sydney. \\ \printead{e3}}
\address[D]{CIRES and School of Mathematics and Physics, University of Queensland. \\ 
\printead{e4}}
\address[E]{ICSI, LBNL, and Department of Statistics, University of California, Berkeley.\\
\printead{e5}}

\end{aug}

\begin{abstract}
The problem of model selection is considered for the setting of interpolating estimators, where the number of model parameters exceeds the size of the dataset. %
Classical information criteria typically consider the large-data %
limit, penalizing model size. However, these criteria are not appropriate in modern settings where overparameterized models tend to perform well. 
For any overparameterized model, we show that there exists a dual underparameterized model that possesses the same marginal likelihood, thus establishing a form of \emph{Bayesian duality}.
This enables more classical methods to be used in the overparameterized setting, revealing the \emph{Interpolating Information Criterion}, a measure of model quality that naturally incorporates the choice of prior into the model selection. 
Our new information criterion accounts for prior misspecification, geometric and spectral properties of the model, and is numerically consistent with known empirical and theoretical behavior in this regime.

\end{abstract}

\end{frontmatter}

\maketitle

\section{Introduction}
The task of {\em model selection}, that is, determining which of a prescribed set of candidate models is most suitable in some sense, is fundamental in statistical learning. 
Some of the most important model selection tools in a modern statistician's toolbox are \emph{cross validation} and \emph{information criteria} \cite{konishi2008information}, with the latter being the focus of this work. Assigning each model a real number, where lower numbers are to be preferred, information criteria typically trade off model performance and complexity, providing practitioners with a quantitative framework to apply 
Occam's razor~\cite{mackay1992bayesian}. 
Two of the most commonly used information criteria are the \emph{Akaike information criterion} (AIC) \cite{akaike1998information} and the \emph{Bayesian information criterion} (BIC) \cite{schwarz1978estimating}.
The derivation of the BIC, in particular, considers the {\em Bayesian} formulation of an information criterion. This involves prescribing, to a given model, a {\em prior distribution} on the set of admissible parameters, which has density $\pi$. The normalizing constant of the posterior distribution is the {\em marginal likelihood} of the data under the chosen model and expresses a given model's preference for the data. 
Deep connections between marginal likelihood, cross validation \cite{fong2020marginal}, and PAC-Bayes generalization risk bounds \cite{germain2016pac}, make the negative log-marginal likelihood (often called the \emph{Bayes free energy}) an obvious choice as an information criterion.
When all candidate models are weighted equally \emph{a priori}, classical Bayesian model selection amounts to choosing the model with the largest marginal likelihood.

Since exact computation of marginal likelihoods is often intractable in practice~\cite{bos2002comparison}, the BIC approximates the log-marginal likelihood in the large data regime (as the size of the dataset $N \to \infty$).
However, the underlying approximation makes use of Laplace's method, requiring invertibility of the Hessian of the log-likelihood, and consequently it fails in the overparameterized setting,\footnote{A key distinction is that overparameterized models are defined to be those that have more parameters than data, and not necessarily those with \emph{more parameters than are ``necessary''}. The latter judgment implicitly requires an information criterion to quantify an optimal (``necessary'') number of parameters, typically the AIC or BIC. Indeed, a consequence of this work is that such a quantity may exceed the number of data points or be infinite. } where the number of parameters $d$ in the model exceeds the number of data points $N$ \cite{wei2022deep}. 
Alternatives suitable for singular models, where the Hessian is not invertible, have also been proposed \cite{drton2017bayesian, watanabe2013widely} leveraging the \emph{singular learning theory} pioneered by Watanabe \cite{watanabe2007almost,watanabe2009algebraic}. However, these too rely on the large data limit with fixed model size, which fundamentally cannot characterize the $d\gg N$ behavior seen in modern machine learning. Significant overparameterization admits classes of large models (e.g., deep neural networks) that are able to \emph{interpolate}: the predictive function can precisely match all training data to its labels.
Contrary to widely held belief,\footnote{The common belief is clearly stated in \cite[pg. 37]{hastie2009elements}: ``interpolating fits... [are] unlikely to predict future data well at all.'' Several of these same authors would later go on to publish prominent works highlighting ``surprising'' cases where this rule of thumb fails (double descent) \cite{hastie2022surprises}. This is discussed in further detail in \S5.2.} even when training to near-zero loss, these models can still attain excellent generalization performance \cite{hastie2022surprises,zhang2021understanding}.

Assuming mild regularity, the parameters obtained by training in this manner lie near some embedded submanifold $\mathcal{M}$ of the parameter space, corresponding to the zero level-set of the loss function. 
In practice, a convergent method that trains a model to zero-loss will yield a \emph{single} parameter choice $\theta^\star \in \mathcal{M}$. While any parameter in $\mathcal{M}$ is viable, the training procedure forces the choice of a unique element of $\mathcal{M}$. Such a procedure is tantamount to {\em implicitly} imposing a regularization function which was used to select the chosen $\theta^\star \in \mathcal{M}$ as the solution to a constrained optimization problem on $\mathcal{M}$. Minimizing a regularizer $R$ over $\mathcal{M}$ is equivalent to maximizing a corresponding prior density $\pi$ over $\mathcal{M}$ where the negative log-prior is proportional to $R$. In this scenario, the prior is defined implicitly through the training procedure \cite{neyshabur2017implicit}, playing the important role of identification by placing higher probabilities on certain regions of $\mathcal{M}$. 

In light of the above, we construct an information criterion by examining the marginal likelihood under a regime comprised of \emph{two} limits taken sequentially: the first maximizes the likelihood (achieving zero loss), and the second maximizes the prior (concentrating around the interpolator $\theta^\star$). This strategy provides a new approximation which replaces the underparameterized, large data regime ($N \to \infty$), with an overparameterized, interpolating regime. %

The primary contribution of this work is the following interpolating information criterion (IIC) for overparameterized models and the theoretical tools required for its derivation:
\begin{equation}
\label{eq:IIC}
\IIC=\underset{\text{regularization}}{\underbrace{\log\log\frac{\pi(\theta_0)}{\pi(\theta^{\star})}}}+\underset{\text{sharpness}}{\underbrace{\vphantom{\int}\frac1N\log\det  \big( DF(\theta^\star) DF(\theta^\star)^\top \big)}}+ \underset{\text{curvature}}{\underbrace{\vphantom{\int}\frac1N\log \mathcal{K}_{\mathcal{M}}^\pi(\theta^\star, \theta_0)}}- \underset{\text{correction}}{\underbrace{\vphantom{\int}\log N}}, %
\end{equation} 
where the explicit expression for the relative curvature term $\mathcal{K}_{\mathcal{M}}^\pi$ is given in \eqref{eq:curve}. 
Here, $\theta_0$ and $\theta^\star$ are respectively, the global maximizers of the prior over an underlying parameter space $\Theta$ and on $\mathcal{M}$, $F(\theta) = (f(x_i,\theta))_{i=1}^n$ for $f$ a parameterisation of the model class and $x_1,\dots,x_n \in \mathcal{X}$ input data, and $DF(\theta^{\star})$ is the Jacobian of $F$ at $\theta^\star$. Here, $n$ is the number of points in the dataset, each with $m$ labels, and $N = mn$. %
The IIC contains three key terms as well as a correction for data size, depicted in Figure~\ref{fig:intuition_terms}: the first term penalizes prior misspecification / the appropriateness of the corresponding regularizer; the second encourages local smoothness of the predictive function; and the third compares the relative volume of the prior %
at its peak on $\mathcal{M}$ and $\Theta$, accounting for curvature of the submanifold $\mathcal{M}$. The precise nature of these terms is discussed further in Section \ref{sec:Interpretation}. %
\begin{figure}[t]
\begin{tabularx}{\textwidth}{XXX}
\centering \textbf{Regularization} & 
\centering \textbf{Sharpness} & \centering \textbf{Curvature}
\end{tabularx}
\begin{tabularx}{\textwidth}{XXX}
\centering $\log\log \frac{\pi(\theta_0)}{\pi(\theta^\star)}$ & 
\centering $\log \det(DF(\theta^\star)DF(\theta^\star)^\top)$ & \centering 
$\log \mathcal{K}_{\mathcal{M}}^\pi(\theta^\star,\theta_0)$
\end{tabularx}
\vspace{.3cm}

\begin{overpic}[width=0.9\textwidth]{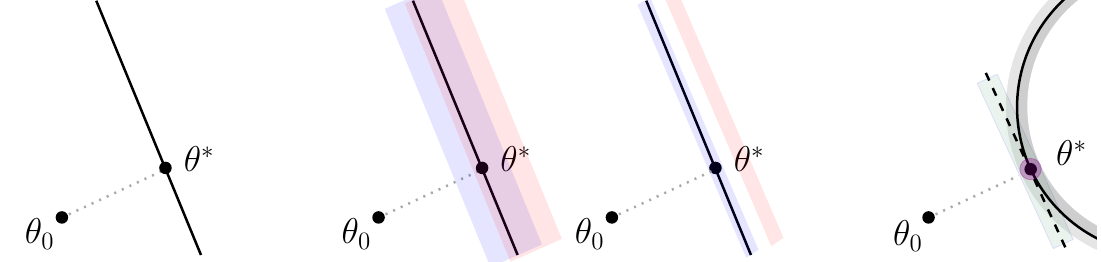}
\put(30,20){train}
\put(44,22){test}
\put(54,20){train}
\put(63,22){test}
\put(52,13){\large vs.}
\end{overpic}

\caption{\label{fig:intuition_terms}
Visualizing the three primary terms of the IIC. 
(Left) The regularization term measures closeness of $\theta_0$ to the interpolating manifold. 
(Center) The sharpness term encourages flatter vs. sharper minima in the loss, as this suggests the region of small training loss (blue) overlaps more regions of small test loss (red)---see \cite{keskar2016large} for a similar visualization in one-dimension. 
(Right) The curvature term penalizes regions where the vector normal to the prior is less stable than around its global minima, as this suggests more of the neighboring region along the manifold $\mathcal{M}$ (grey) falls outside the region of high prior probability (green).
}
\end{figure}
\begin{figure}[tb]
    \centering
    \includegraphics[width=0.7\textwidth]{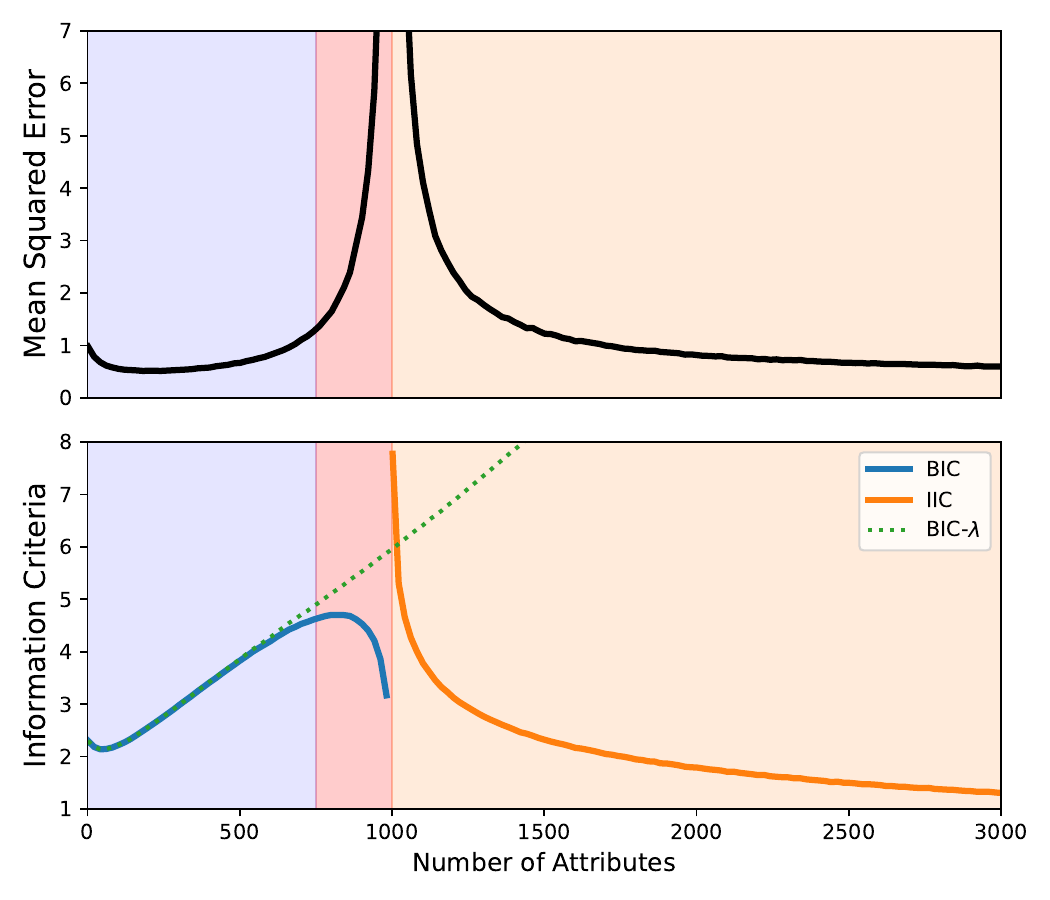}
    \caption{Mean squared error (MSE; top) vs. (bottom) classical BIC (\ref{eq:BICLinearReg}), our novel IIC (\ref{eq:IICLinearReg}), and the BIC for ridge regression with ridge parameter $\lambda=0.1$ \emph{(BIC-$\lambda$)}, for random Fourier features with varying number of attributes. Measures are averaged over $100$ iterations applied to random subsamples of $n=1000$ input-output pairs from the \texttt{MNIST} dataset \cite{lecun1998gradient}. The underparameterized, critical (where BIC fails), and overparameterized regimes are highlighted blue, red, and yellow, respectively. \textbf{Excluding the critical region, the combined BIC and IIC curve exhibits double-descent.}\label{fig:DDRFF}}
\end{figure}

An important secondary contribution of this work, from which the IIC is derived, is the characterization of a strong form of \emph{Bayesian duality}.\footnote{Note that this is not the first investigation of forms of duality arising in the Bayesian setting; for example, \cite{khan2021bayesian,mollenhoff2022sam} explore other notions. The formulation herein is distinct in its consideration of a distributional version of augmented Lagrangian duality, the latter having its origins in optimization theory.} 
More specifically, similar to \cite{de2021quantitative}, we leverage tools from geometric measure theory to show that an overparameterized model possesses an underparameterized dual model over data space with equal marginal likelihood. This dual model can be shown to be smooth in many cases, satisfying the conditions for an analogue of the Laplace approximation. Applying this approximation to the dual model reveals the IIC. Since the IIC is constructed by concentration onto the interpolating submanifold, it is non-asymptotic in both $d$ and $N$. Bayesian duality explains the ``surprising'' double descent curve, famously raised in \cite{belkin2019reconciling} (see also \cite{hastie2022surprises,loog2020brief}), which can be pieced together from a combination of the classical BIC and our introduced IIC (Figure \ref{fig:DDRFF}). For more details, we refer to the discussion in Section \ref{sec:DD}. 

The remainder of the paper is organized as follows. 
Section~\ref{sec:Preliminaries} provides the requisite background surrounding interpolating solutions to model fitting problems, appealing to duality from the optimization point of view. 
Section~\ref{sec:MargLikeDuality} introduces the relevant statistical frameworks, and establishes a key result that characterizes Bayesian duality (Proposition \ref{thm:DR}). Section~\ref{sed:IIC} provides a version of Laplace's method on manifolds which may be of independent interest (Lemma~\ref{prop:LaplaceManifold}), and allows for the derivation of the IIC (Theorem~\ref{thm:IIC}). Finally, Section~\ref{sec:Wider} discusses related work, the significance of the IIC, and the individual terms in \eqref{eq:IIC}, as they relate to existing theory and practice in the deep learning literature. A suite of numerical experiments is presented in Section~\ref{sec:Numerics}, highlighting correlations between IIC and other measurements of model quality in linear and gamma regression (Section~\ref{sec:LinGamReg}); predicting the poor performance of monomial vs. Chebyshev polynomial interpolants (Section~\ref{sec:Poly}); and highlighting the importance of the regularizer for predicting performance in diagonal linear neural networks (Section~\ref{sec:DLNN}). For the reader's convenience, key supporting results are provided in the appendices in language that is consistent with our work.

\section{Preliminaries}\label{sec:Preliminaries}

Consider a parameterized class of predictors $\{f(\cdot,\theta)\}_{\theta\in\Theta}$ %
where $f:\mathcal{X}~\times~\Theta~\to~\mathcal{Y}$, for $\mathcal{Y} \subseteq \mathbb{R}^m$, %
which we take to be $\mathcal{C}^\infty$-smooth on $\Theta \subseteq \mathbb{R}^d$. 
For a fixed dataset of input-output pairs $\mathcal{D} = (x_{i},y_{i})_{i=1}^{n} \subset \mathcal{X} \times \mathcal{Y}$, the corresponding \emph{regression problem} seeks to find parameters for the most suitable predictor. 
For $\ell:\mathcal{Y}\times \mathcal{Y} \to [0,\infty)$ a smooth loss function measuring the accuracy of a prediction for a single input-output pair, the corresponding \emph{Gibbs likelihood} is $$p_\gamma(y_i \vert \theta, x_i) \propto \exp\left(-\frac1\gamma \ell(f(x_i,\theta),y_i)\right),$$ where the temperature $\gamma > 0$ is arbitrary. 
For brevity, we write $L(y,y') = \sum_{i=1}^n \ell(y_i, y_i')$, where $y = (y_i)_{i=1}^n \in \mathbb{R}^{n\times m}$ and $F:\Theta \to \mathbb{R}^{n\times m}$ to denote $F(\theta) = (f_j(x_i,\theta))_{i,j=1}^{n,m}$. %
In the sequel we will assume that for any $y,y'$, (a) $\ell(y,y') \geq 0$; and (b) $\ell(y,y') = 0$ if and only if $y = y'$. 
Note that (a) is equivalent to boundedness from below, while (b) uniquely characterises minima of $\ell$. Such loss functions are ubiquitous in regression tasks. In particular, we have $L^2$ regression problems in mind (where $\ell(y,y') = \|y-y'\|^2$), and aim to keep the model class $f$ as general as possible. Let $DF:\mathbb{R}^d \to \mathbb{R}^{mn \times d}$ denote the Jacobian of the vectorization of $F$ and write $J(\theta) \coloneqq DF(\theta) DF(\theta)^\top \, :\, \mathbb{R}^d \to \mathbb{R}^{mn \times mn}$. 

The \emph{marginal likelihood} (also \emph{model evidence} or \emph{partition function}) is the normalizing constant of the posterior %
\[
\mathcal{Z}_{n,\gamma} = \int_{\Theta} \pi(\theta) \prod_{i=1}^n p_\gamma(y_i \vert \theta,x_i) \dd \theta = \int_\Theta c_{n,\gamma}(F(\theta)) e^{-\frac1\gamma L(F(\theta), y)} \pi(\theta) \dd \theta,
\]
where $c_{n,\gamma}(z)^{-1} = \prod_{i=1}^n\int_{\mathcal{Y}} e^{-\frac1\gamma \ell(z_i,y')} \dd y'$ is assumed finite for $z = (z_i)_{i=1}^n \in \mathbb{R}^{n\times m}$, and is the main object of study in this work. 
The marginal likelihood is the probability $p(\mathcal{D})$ of the dataset under the prescribed likelihood-prior pair, and is commonly used as a measure of model quality. Referring to the quantity $\mathcal{F}_{n,\gamma} = -\log \mathcal{Z}_{n,\gamma}$ as the \emph{Bayes free energy}, maximizing model quality (as represented by the marginal likelihood) is equivalent to the \emph{free energy principle} in statistical mechanics.

\subsection{Interpolators}
\label{sec:Interpolators}

Point estimators for regression models are typically maximum likelihood estimators (MLEs); for the Gibbs likelihood, these are
\begin{equation}
\tag{MLE}
\label{eq:MLE}
\theta^\star \in \mathcal{M} \coloneqq \argmax_{\theta \in \Theta} \prod_{i=1}^n p_\gamma(y \vert \theta,x_i) = \argmin_{\theta \in \Theta} \sum_{i=1}^n \ell(f(x_i, \theta), y_i).
\end{equation}
Regardless of the form of $f$, since $\ell$ is nonnegative, any $\theta$ such that $f(x_i,\theta) = y_i$ for each $i=1,\dots,n$ must necessarily be in $\mathcal{M}$. 
These are \emph{interpolators}: 
point estimators that achieve zero loss by perfectly fitting the dataset $\mathcal{D}$. 
We assume that zero loss estimators are achieved, as is the case for many tasks in machine learning. This corresponds to the non-degeneracy assumption that the set of MLEs is non-empty:%
\[
\mathcal{M} = \{\theta\in\Theta\,:\,f(x_i,\theta)=y_i\text{ for all }i=1,\dots,n\} \neq \emptyset.
\]
In the overparameterized setting ($d > mn$), $\mathcal{M}$ is often not only of infinite cardinality, but a submanifold of positive dimension, resulting in an almost canonical lack of identifiability. 
To uniquely identify an estimator, a \emph{regularizer} $R$ is introduced either explicitly \cite{candes2007dantzig,Tibs96} or implicitly via the training procedure \cite{gidel2019implicit,GWBNS17,MO11-implementing,smith2020origin}. We now seek $\theta^\star$ to minimize $R$ on $\mathcal{M}$, leading to the following definition.

\begin{definition}
An \emph{interpolator} %
is a point estimator $\theta^\star$ of the form
\begin{equation}
\tag{INT}
\label{eq:INT}
\theta^\star \in \argmin_{\theta \in \Theta} R(\theta)\quad\mbox{subject to}\quad \theta \in \mathcal{M}.
\end{equation}
\end{definition}
While there is no guarantee that such a minimizer will be unique for arbitrary $f$ and $R$, for each $\theta^\star \in \mathcal{M}$, there exists an $R$ which uniquely identifies $\theta^\star$. This can be seen by taking $R(\theta)~=~\|\theta~-~\theta^\star\|^2$, for example. 
By considering these equality constraints, we capture a wide class of problems where we regress against the outcomes directly. 
However, in classification tasks constructed via multinomial likelihood models for which maximizing the log-likelihood is equivalent to minimizing the cross-entropy loss, interpolators are naturally defined in terms of \emph{inequality constraints}. 
Nevertheless, one can exploit the IIC as an approximation in the classification setting by using a loss such as a suitable adjustment of the Brier score in a generalized Bayesian framework \cite{bissiri2016general}, which typically achieves comparable results in practice \cite{HB21}. 

\begin{example*}[\textsc{Moore--Penrose Pseudoinverse}]
Consider the setting of least-squares linear regression: letting $\mathcal{X} = \Theta = \mathbb{R}^d$, $\mathcal{Y} = \mathbb{R}$, $f(x,\theta) = x\cdot\theta$, and $\ell(y,y') = (y-y')^2$. Writing $X = (x_{ij})_{i=1,j=1}^{n,d} \in \mathbb{R}^{n\times d}$ and $y = (y_i)_{i=1}^n \in \mathbb{R}^n$, if $y \in \text{range}(X)$, then
\begin{equation}
\label{eq:LinSubspace}
\mathcal{M} = \{X^+ y + (I - X^+ X) w : w \in \mathbb{R}^d\},
\end{equation}
where $X^+$ is the \emph{Moore-Penrose pseudoinverse} \cite[\S7.3]{horn2012matrix}, which corresponds to the interpolator with $R(\theta) = \|\theta\|^2$:
\begin{align}\label{eq:ridgeless}
\theta^\star = X^+ y = \argmin_{\theta \in \mathbb{R}^d} \|\theta\|^2\quad\mbox{subject to}\quad x_i \cdot \theta = y_i\text{ for all }i=1,\dots,n.
\end{align}
\end{example*}
Regularized regression has a Bayesian interpretation if we specify a prior density $\pi$, set $R(\theta)=-\log\pi(\theta)$ as the regularizer, and consider the \emph{maximum a posteriori} (MAP) estimator
\begin{equation}
\tag{MAP}
\label{eq:MAP}
\theta^\star \in \argmax_{\theta \in \Theta} \pi(\theta) \prod_{i=1}^n p_\gamma(y\vert \theta,x_i)  = \argmin_{\theta \in \Theta} R(\theta) + \frac{1}{\gamma}L(F(\theta),y).
\end{equation}
In the linear regression setting, \eqref{eq:MAP} can be viewed as a soft-constrained relaxation of \eqref{eq:ridgeless}, as sending $\gamma\to0$ also yields the Moore--Penrose pseudoinverse. This also holds more generally when $R$ is bounded from below, as in Lemma \ref{lem:MAPLimit}. This asymptotic connection between \eqref{eq:MAP} and \eqref{eq:INT} motivates us to consider the limit of the marginal likelihood $\mathcal{Z}_{n,\gamma}$ as $\gamma \to 0^+$ in Section \ref{sec:MargLikeDuality}. %
\begin{lemma}
\label{lem:MAPLimit}
Assume $R$ is bounded from below on $\Theta$. Any limit of a sequence of solutions $\theta_\gamma$ to \eqref{eq:MAP} as $\gamma \to 0^+$ is a solution to \eqref{eq:INT}. 
\end{lemma}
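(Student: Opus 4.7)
The plan is to prove the result in two clean steps: (i) show that any limit point $\theta^\star$ of $\{\theta_\gamma\}$ lies in $\mathcal{M}$, and then (ii) show that among elements of $\mathcal{M}$, it minimises $R$. Both steps rest on a single comparison inequality obtained by testing the optimality of $\theta_\gamma$ against an arbitrary $\tilde\theta \in \mathcal{M}$, which is available because the non-degeneracy assumption ensures $\mathcal{M} \neq \emptyset$, and because $L\circ F$ is continuous by the smoothness of $\ell$ and $f$.

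For step (i), fix any $\tilde\theta \in \mathcal{M}$, so that $L(F(\tilde\theta), y) = 0$. By the optimality of $\theta_\gamma$ for \eqref{eq:MAP},
\[
R(\theta_\gamma) + \frac{1}{\gamma}L(F(\theta_\gamma), y) \;\le\; R(\tilde\theta) + \frac{1}{\gamma}L(F(\tilde\theta), y) \;=\; R(\tilde\theta).
\]
Using $R \ge R_0$ for some finite lower bound $R_0$, this rearranges to $L(F(\theta_\gamma), y) \le \gamma (R(\tilde\theta) - R_0) \to 0$ as $\gamma \to 0^+$. Continuity of $L \circ F$ then transfers this to $L(F(\theta^\star), y) = 0$, placing $\theta^\star \in \mathcal{M}$.

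For step (ii), the same inequality gives $R(\theta_\gamma) \le R(\tilde\theta) - \tfrac{1}{\gamma} L(F(\theta_\gamma), y) \le R(\tilde\theta)$ for every $\tilde\theta \in \mathcal{M}$, uniformly in $\gamma$. Passing to the $\liminf$ along the convergent sequence and invoking lower semicontinuity of $R$ at $\theta^\star$ yields $R(\theta^\star) \le \liminf_{\gamma \to 0^+} R(\theta_\gamma) \le R(\tilde\theta)$. Since $\tilde\theta \in \mathcal{M}$ was arbitrary, $\theta^\star$ solves \eqref{eq:INT}.

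The main subtlety, rather than a genuine obstacle, is a regularity question on $R$: the hypothesis states only that $R$ is bounded below, while the final $\liminf$ step needs $R$ to be lower semicontinuous at $\theta^\star$. This is automatic in the Bayesian context $R = -\log \pi$ for a continuous prior density and appears to be implicit in the paper's setting; with that mild hypothesis in force the argument reduces to a standard exact-penalisation calculation and I would not anticipate any further difficulty.
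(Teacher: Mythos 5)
Your proposal is correct and follows essentially the same exact-penalisation argument as the paper: test the optimality of $\theta_\gamma$ against a point of $\mathcal{M}$, deduce $L(F(\theta_\gamma),y)\le\gamma\,(\text{const})\to 0$, and pass to the limit. The only (minor) refinements are that you compare against an arbitrary $\tilde\theta\in\mathcal{M}$ rather than an assumed minimiser of \eqref{eq:INT}, and that you make explicit the lower semicontinuity of $R$ (and continuity of $L\circ F$) needed in the limiting step, which the paper's proof uses implicitly since $R=-\log\pi$ is continuous in its setting.
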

\begin{proof}
Without loss of generality, assume $R$ is non-negative. Let $\theta^\star$ be a solution to \eqref{eq:INT}. Since $R(\theta_{\gamma}) + L(F(\theta_{\gamma}),y)/\gamma \leq R(\theta^{\star})$, this implies $0 \leq L(F(\theta_{\gamma}),y) \leq \gamma R(\theta^{\star})$, and hence $L(F(\theta_{\gamma}),y) \to 0$. %
Therefore, any limit point $\bar{\theta}$ of $\theta_\gamma$ is in $\mathcal{M}$ and satisfies $R(\bar{\theta}) \leq R(\theta^\star)$.
\end{proof}

It is important to highlight 
the subtle point 
that despite Lemma \ref{lem:MAPLimit}, 
and contrary to popular belief, 
(\ref{eq:MAP}) is not a valid ordinary Lagrangian for \eqref{eq:INT}. 
In particular, they are not necessarily equivalent %
for any fixed $\gamma$, and so need not exhibit even qualitatively similar behavior. 
To see this, note the problem (\ref{eq:INT}) is equivalent to
\begin{equation}
\label{eq:HardINT}
\theta^\star \in \argmin_{\theta \in \Theta} R(\theta) \quad \text{subject to} \quad L(F(\theta),y) = 0,
\end{equation}
but since every $\theta\in\mathcal{M}$ is a critical point of the loss, 
unless $\nabla R(\theta^\star) = 0$, there is no ordinary Lagrange multiplier for the formulation (\ref{eq:HardINT}).%

\subsection{Duality}

The fundamental principle that underpins the prediction error of interpolators is \emph{duality}. To see this, it is instructive to first consider the running example of least-squared linear regression. Assuming that $\cov(X) = I$ with each $(x_{i}, y_i)$ independent and identically distributed, let $\theta^\star = \argmin_{\theta \in \mathbb{R}^d}\|X \theta - y\|^2$ (the least-squares estimator) in the underparameterized setting where $n > d$ and the Moore-Penrose estimator (\ref{eq:ridgeless}) in the overparameterized setting where $d > n$. The prediction error splits into the familiar bias--variance decomposition, with the variance playing a particularly prominent role in the overparameterized setting. From \cite[Theorem 1]{hastie2022surprises}, under mild conditions, the variance component of the prediction error reduces to a neat closed-form solution as $d,n\to\infty$ with $\frac{d}{n}\to c$:
\begin{equation}
\label{eq:linvariance}
\tr(\cov(\theta^\star\mid X)) \to \begin{cases}
    \displaystyle\frac{1}{1-c} \sim \displaystyle\frac{d}{n - d} & \text{ if } n > d \\
    & \\
    \displaystyle\frac{c}{1-c} \sim \displaystyle\frac{n}{d - n} & \text{ if } d > n.
\end{cases}
\end{equation}
Depending on whether $d > n$ or $d < n$, the roles of $n$ and $d$ in \eqref{eq:linvariance} appear to interchange, indicating a 
reflection between
parameter dimension and data size. We will find this is not unique to linear regression, as a similar phenomenon holds for general overparameterized models by a notion of \emph{duality}.

From the exchange of roles of $n$ and $d$ in \eqref{eq:linvariance}, we infer that in the overparameterized regime, there is an underlying \emph{underparameterized} optimization problem which can be considered to simplify calculations. More concretely, we observe that $x \cdot \theta^\star = x^\top X^+ y = y^\top (X^\top)^+ x$ and so the solution to (\ref{eq:ridgeless}) can be recast as the solution to a corresponding problem over the column space of $X$.
In particular, (\ref{eq:ridgeless}) is solved by $\theta^\star = X^\top \lambda^\star$ where
\[
\lambda^\star = \argmin_{\lambda \in \mathbb{R}^n} \|X^\top \lambda - y^\star\|^2,
\]
and $y^\star = X^+ y$. This is a special case of \emph{strong duality}, where instead of the column space of $X$ (in $d$ dimensions), the problem over $\lambda$ now occurs over the codomain of $F$ (in $n$ dimensions). This reduces the overparameterized problem back to the classical case, where it is typically easier to solve.

For more general models involving non-linear equality constraints, the primal problem is non-convex, and hence the ordinary notion of strong duality is not generally applicable. Instead, one can establish a similar zero-duality gap by considering 
an augmented Lagrangian function associated with \eqref{eq:INT}: a function $\Lambda: \Theta  \times (0,\infty) \times \mathbb{R}^{n \times m} \to \mathbb{R}$, defined as 
\begin{equation}
\label{eq:Lagrangian}
    \Lambda(\theta,\gamma, \lambda) %
    = %
    R(\theta) + \frac{1}{\gamma} L(F(\theta),y) + \sum_{i=1}^n \lambda_i \cdot (f(x_i, \theta) - y_i),
\end{equation}
where %
we have used the loss $L$ as the augmenting function \cite[Definition 11.55]{rockafellar2009variational}. Note that (\ref{eq:Lagrangian}) differs from (\ref{eq:MAP}) only by a weighted average over the residuals. By introducing dual variables $\lambda$ on the codomain, (\ref{eq:Lagrangian}) sidesteps the aforementioned issues with (\ref{eq:MAP}) and the formulation (\ref{eq:HardINT}).
Under mild conditions, a form of strong duality can be established in terms of the augmented Lagrangian for non-convex optimization \cite[Theorem 11.59]{rockafellar2009variational}. Here, we present a simplified version of this theorem adapted to our setting. 

\begin{lemma}[\textsc{Augmented Lagrangian Duality} {\cite[Theorem 11.59]{rockafellar2009variational}}]
\label{lem:Duality}
Suppose $\Theta$ is compact and $\mathcal{M}$ is non-empty. Considering the dual function $\Lambda^{\star}: (0,\infty) \times \mathbb{R}^{n \times m}  \to \mathbb{R}$ given by $\Lambda^{\star}(\gamma, \lambda) = \inf_{\theta \in \Theta} \Lambda(\theta,\gamma, \lambda)$, we have
\begin{align*}
\inf_{\theta\in \mathcal{M} \subset \mathbb{R}^{d}} \; R(\theta) = \sup_{\lambda \in \mathbb{R}^{n \times m}} \sup_{\gamma \in (0,\infty)}\Lambda^\star(\gamma,\lambda).
\end{align*}
In particular, the respective solutions $\theta^\star$, $\lambda^\star$, and $\gamma^\star$, when they exist, are related by $R(\theta^{\star}) = \Lambda(\theta^\star,\gamma^\star, \lambda^\star) = \Lambda^\star(\gamma^\star,\lambda^\star)$. 
\end{lemma}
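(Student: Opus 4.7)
The plan is to prove the two directions of the equality separately, with the nontrivial direction (``strong duality'') reducing to a classical penalty-method argument that leverages Lemma~\ref{lem:MAPLimit}.

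For weak duality, I would note that whenever $\theta\in\mathcal{M}$ the equality constraints $f(x_i,\theta)=y_i$ hold, so both the augmenting term $\gamma^{-1}L(F(\theta),y)$ and the linear multiplier term $\sum_i\lambda_i\cdot(f(x_i,\theta)-y_i)$ vanish, giving $\Lambda(\theta,\gamma,\lambda)=R(\theta)$. Passing to the infimum over $\Theta$ on the left yields $\Lambda^{*}(\gamma,\lambda)\leq R(\theta)$ for every $\theta\in\mathcal{M}$ and every admissible pair $(\gamma,\lambda)\in(0,\infty)\times\mathbb{R}^{n\times m}$, hence $\sup_\lambda\sup_\gamma \Lambda^{*}(\gamma,\lambda)\leq \inf_{\mathcal{M}} R$.

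For the reverse inequality, I would specialize to $\lambda=0$ and send $\gamma\to 0^{+}$. By compactness of $\Theta$ and continuity of $R$ and $L\circ F$, a minimizer $\theta_\gamma\in\argmin_\theta[R(\theta)+\gamma^{-1}L(F(\theta),y)]$ exists for each $\gamma>0$. Taking any $\theta^{\circ}\in\mathcal{M}$ as a test point gives $R(\theta_\gamma)+\gamma^{-1}L(F(\theta_\gamma),y)\leq R(\theta^{\circ})$, and since $R$ is bounded below on compact $\Theta$, this forces $L(F(\theta_\gamma),y)=\bigoh(\gamma)\to 0$. Extracting a convergent subsequence $\theta_\gamma\to\bar{\theta}$, continuity of $L\circ F$ shows $\bar{\theta}\in\mathcal{M}$, and Lemma~\ref{lem:MAPLimit} certifies $R(\bar{\theta})\leq \inf_{\mathcal{M}} R$, so $\bar{\theta}$ attains the constrained minimum. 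Passing to the limit in $\Lambda^{*}(\gamma,0)\geq R(\theta_\gamma)$ along the subsequence produces $\sup_\gamma\Lambda^{*}(\gamma,0)\geq R(\bar{\theta})=\inf_{\mathcal{M}} R$, closing the duality gap. The second assertion then drops out by substitution: if $(\gamma^{\star},\lambda^{\star})$ attains the dual supremum and $\theta^{\star}$ the primal infimum, then $\theta^{\star}\in\mathcal{M}$ makes the augmenting and residual terms in $\Lambda(\theta^{\star},\gamma^{\star},\lambda^{\star})$ vanish, so the chain $\Lambda^{*}(\gamma^{\star},\lambda^{\star})\leq \Lambda(\theta^{\star},\gamma^{\star},\lambda^{\star})=R(\theta^{\star})=\inf_{\mathcal{M}} R=\sup_\lambda\sup_\gamma\Lambda^{*}$ collapses to equality.

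The main obstacle is the penalty-limit step: one needs compactness of $\Theta$ to extract the subsequence $\theta_\gamma\to\bar{\theta}$, together with lower semicontinuity of $R$ and continuity of $L\circ F$ to transfer the bound through the limit. Interestingly, the dual variable $\lambda$ plays no essential role in closing the gap here, because the augmenting function $L$ already vanishes exactly on $\mathcal{M}$ and the constraints are equalities rather than inequalities; $\lambda$ is retained for consistency with the general Rockafellar--Wets augmented-Lagrangian framework, which will be exploited in the subsequent statistical developments such as Proposition~\ref{thm:DR}.
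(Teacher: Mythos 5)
Your proof is correct, but it takes a genuinely different route from the paper: the paper does not prove this lemma at all, it simply specializes and cites the general nonconvex augmented-Lagrangian duality theorem of Rockafellar and Wets \cite[Theorem 11.59]{rockafellar2009variational}, whose hypotheses (properness, lower semicontinuity, level-boundedness of the perturbed objective) are supplied here by the compactness of $\Theta$ and the smoothness of $f$ and $\ell$. You instead give a self-contained exact-penalty argument: weak duality is immediate because $\Lambda(\cdot,\gamma,\lambda)=R$ on $\mathcal{M}$, and the gap is closed already at $\lambda=0$ by sending $\gamma\to0^{+}$, extracting a convergent subsequence of penalized minimizers via compactness, and invoking Lemma~\ref{lem:MAPLimit} (whose objective is exactly $\Lambda(\cdot,\gamma,0)$). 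This works precisely because the augmenting function is the nonnegative loss $L\circ F$, which vanishes exactly on $\mathcal{M}$, so the multiplier term is not needed to achieve a zero duality gap --- your observation to this effect is accurate, and it makes the argument more elementary and transparent in this special case, at the cost of not exhibiting the role of $\lambda$ (e.g.\ for exact penalization at finite $\gamma$ or for the existence of dual maximizers) that the general theorem provides. One small point to make explicit: your limit-transfer step needs $R$ lower semicontinuous (and $L\circ F$ continuous), which the lemma statement does not list but which is implicit both in the paper's standing smoothness assumptions and in the hypotheses of the cited theorem; with that stated, the chain $\sup_{\gamma}\Lambda^{*}(\gamma,0)\geq\liminf_{k}R(\theta_{\gamma_{k}})\geq R(\bar{\theta})=\inf_{\mathcal{M}}R$ and your concluding collapse of inequalities for $(\theta^{\star},\gamma^{\star},\lambda^{\star})$ are both sound.
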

Lemma \ref{lem:Duality} provides an alternative representation of solutions to \eqref{eq:INT} through a dual objective over the temperature $\gamma$ and the level sets of $F$. This turns an overparameterized problem \eqref{eq:INT} into an underparameterized problem. An analogous procedure of recasting the marginal likelihood in terms of the likelihood function's level sets will (shortly, in Proposition \ref{thm:DR}) form a key part of the derivation of the IIC.

\section{Bayesian Duality} 
\label{sec:MargLikeDuality}

The temperature $\gamma$ now plays a significant role, controlling the spread of the posterior, and concentrating the integral about the set of interpolators $\mathcal{M}$ as $\gamma$ becomes small. 
To measure the model quality of interpolators under the marginal likelihood, we would like to consider $\lim_{\gamma \to 0^+}\mathcal{Z}_{n,\gamma}$. For underparameterized models, this is often possible using Laplace's method \cite[\S9.1.2]{konishi2008information}. Unfortunately, the relevant assumptions inevitably fail in the overparameterized case, necessitating a dual formulation.

The derivation of our information criterion relies upon two key assumptions, the first on the integrability of the prior $\pi$ and predictor $F$, and the second on their regularity. The integrability condition requires that the prior decays sufficiently quickly over sets where $J(\theta)$ is near-singular, and the regularity assumption ensures smoothness and that the limiting object is well-defined.

\begin{assumption}[\textsc{Integrability}]
\label{ass:Base}
The interpolating manifold $\mathcal{M}$ is nonempty and the mapping $\theta \mapsto \pi(\theta) \det J(\theta)^{-1/2} \in L^1(\mathbb{R}^d)$. 
\end{assumption}

\begin{assumption}[\textsc{Regularity}]
\label{ass:LocalReg}
$F$ and $\pi$ are $\mathcal{C}^{\infty}$-smooth, $DF$ is full-rank on $\mathcal{M}$, and the marginal likelihood satisfies $\limsup_{\gamma \to 0^+} \mathcal{Z}_{n,\gamma} < +\infty$.
\end{assumption}

Assumption \ref{ass:LocalReg} rules out cases where the prior density $\pi$ is unbounded on $\mathcal{M}$. %
The smoothness assumptions are largely for convenience and can be relaxed considerably in practice, although they allow us to phrase things in terms of existing series expansions. Further, they enable the following conditions that guarantee \emph{global} regularity. We will see in Lemma~\ref{lem:Condition} in Appendix~\ref{sec:Laplace} that Assumption~\ref{ass:Base} and Condition~\ref{ass:GlobalReg} together imply Assumption~\ref{ass:LocalReg}, and can be used to dramatically simplify the proofs in certain settings.%

\begin{condition}[\textsc{Global Regularity}]
\label{ass:GlobalReg}
$F$ and $\pi$ are $\mathcal{C}^{\infty}$-smooth in an open set containing $\Theta$ and $DF$ is full-rank on $\Theta$, %
and one of the following two (disjoint) conditions holds:
\begin{enumerate}[label=(\Alph*)]
    \item \label{ass:GlobalReg_a} \emph{(\textbf{Meigniez condition}).} For any fixed $z \in F(\Theta) \subseteq \mathbb{R}^{mn}$, the preimage of $z$ \linebreak $F^{-1}(z)~=~\{\theta~\in~\Theta \,:\, F(\theta) = z\}$ is diffeomorphic to $\mathbb{R}^{d-mn}$. %
    \item \label{ass:GlobalReg_b} \emph{(\textbf{Ehresmann condition})}. For any compact set $E \subseteq \mathbb{R}^{mn}$, the preimage of $E$ \linebreak $F^{-1}(E)~=~\{\theta~\in~\Theta\,:\, F(\theta) \in E\}$ is compact.
\end{enumerate}
\end{condition}

\begin{remark*}[\textsc{Generalized Linear Models}]
It is straightforward to show that the Meigniez condition holds in the setting of linear models, and by extension, several types of generalized linear models (GLMs) as well. For the latter, note the assumption is satisfied for any model $x \mapsto \phi(A x)$ which is the composition of a bijection $\phi:\mathbb{R}^{m} \to \mathcal{X}$ and a linear map $A:\mathbb{R}^{p\times m}$ where $d = m p$ is the total number of elements in $A$. This is precisely the setting of GLMs, where $\phi$ is the inverse-link function. %
\end{remark*}

On the other hand, the Ehresmann condition does \textit{not} hold for linear models, but can be easier to verify in other cases. Much like strong duality (Lemma \ref{lem:Duality}), if $\Theta$ is compact, then the Ehresmann condition always holds. Otherwise, it is satisfied, for example, if there exists a monotone increasing function $\varphi:\mathbb{R}_+ \to \mathbb{R}_+$ such that $\varphi(x) \to \infty$ as $x \to +\infty$ and $\|F(x)\| \geq \varphi(\|x\|)$ for all $\|x\| > r$. %
We note that this is more general than conditions imposed in similar work \cite{de2021quantitative}. 

Analogous to augmented Lagrangian duality, we formulate a dual representation of the marginal likelihood by integrating over the level sets of $F$. A local approximation around an interpolator can then be obtained by taking a limit in the temperature $\gamma$, and then concentrating over the dual variables. 
Here we appeal to the coarea formula, which is stated as Theorem~\ref{thm:coarea} in Appendix~\ref{sec:gmt} for the reader's convenience. 
This result enables us to define the following representation of the marginal likelihood in terms of a \emph{dual model} over $\mathbb{R}^{mn}$.  %
We remark that such a representation is also true in the underparameterized setting, provided by the area formula (Appendix~\ref{sec:gmt}, Theorem~\ref{thm:area}), although this representation is typically unnecessary. For brevity, we let $p_{n,\gamma}(y \vert \theta, x) = \prod_{i=1}^n p_\gamma(y_i \vert \theta, x_i)$.

\begin{proposition}[\textsc{Bayesian Duality}]
\label{thm:DR}
Let Assumption \ref{ass:Base} hold and $d > mn$. The following two marginal likelihoods are \emph{equivalent}:
\begin{itemize}
    \item $\mathcal{Z}_{n,\gamma}$ corresponding to the model with likelihood $p_{n,\gamma}(y\vert x, \theta)$ and prior $\pi(\theta)$ over $\theta \in \Theta \subseteq \mathbb{R}^d$
    \item $\mathcal{Z}^\star_{n,\gamma}$ corresponding to the model with likelihood $p_{n,\gamma}^\star(y\vert z)$ and prior $\pi^\star(z)$ over $z \in \mathbb{R}^{mn}$, where
    \[
        p_{n,\gamma}^\star(y\vert z) = c_{n,\gamma}(z) e^{-\frac1\gamma L(z,y)},\qquad
        \pi^\star(z) = \int_{F^{-1}(z)} \frac{\pi(\theta)}{\det J(\theta)^{1/2}} \dd \mathcal{H}^{d-mn}(\theta),
    \]
    are probability densities on $\mathbb{R}^{mn}$, where $\mathcal{H}^\alpha$ is $\alpha$-dimensional Hausdorff measure.
\end{itemize}
In other words,
\begin{equation}
\tag{DR}
\label{eq:DR}
\int_{\Theta \subseteq \mathbb{R}^d} p_{n,\gamma}(y\vert x,\theta) \pi(\theta) \dd \theta = \int_{\mathbb{R}^{mn}} p_{n,\gamma}^\star(y \vert z) \pi^\star(z) \dd z.
\end{equation}
\end{proposition}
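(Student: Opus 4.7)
The plan is to apply the coarea formula (Theorem~\ref{thm:coarea}) to a factored form of the integrand of $\mathcal{Z}_{n,\gamma}$, and then identify the resulting iterated integral with the claimed dual marginal likelihood. Writing $h(z) \coloneqq c_{n,\gamma}(z) e^{-L(z,y)/\gamma} = p_{n,\gamma}^\star(y\vert z)$, the original marginal likelihood is $\mathcal{Z}_{n,\gamma} = \int_\Theta h(F(\theta))\pi(\theta)\, \mathrm{d}\theta$, so the task reduces to converting an integral over $\Theta \subseteq \mathbb{R}^d$ into one over the image space $\mathbb{R}^{mn}$ with a fiber-integral as the weight.

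The key move is to apply the coarea formula to the nonnegative integrand $g(\theta) = h(F(\theta))\pi(\theta)/\det J(\theta)^{1/2}$. Because $d > mn$ and the coarea Jacobian of $F$ is exactly $\det J(\theta)^{1/2}$ wherever $DF$ has full row rank, we have $g(\theta)\det J(\theta)^{1/2} = h(F(\theta))\pi(\theta)$ almost everywhere, and the formula yields
\[
\mathcal{Z}_{n,\gamma} = \int_{\Theta} g(\theta)\det J(\theta)^{1/2}\, \mathrm{d}\theta = \int_{\mathbb{R}^{mn}} \int_{F^{-1}(z)} \frac{h(F(\theta))\pi(\theta)}{\det J(\theta)^{1/2}}\, \mathrm{d}\mathcal{H}^{d-mn}(\theta)\, \mathrm{d}z.
\]
Since $h\circ F$ is constant on each fiber $F^{-1}(z)$, it pulls out of the inner integral to give exactly $\int_{\mathbb{R}^{mn}} h(z)\pi^\star(z)\, \mathrm{d}z = \mathcal{Z}^\star_{n,\gamma}$, which is \eqref{eq:DR}.

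It then remains to verify that $p_{n,\gamma}^\star(\cdot \vert z)$ and $\pi^\star$ are honest probability densities. The former follows immediately from the defining normalization $c_{n,\gamma}(z)^{-1} = \prod_{i=1}^n \int_{\mathcal{Y}} e^{-\ell(z_i,y')/\gamma}\, \mathrm{d}y'$, which gives $\int p_{n,\gamma}^\star(y\vert z)\, \mathrm{d}y = 1$. For the latter, I would run an identical coarea computation with $h\equiv 1$: taking $g(\theta) = \pi(\theta)/\det J(\theta)^{1/2}$, the left-hand side becomes $\int_\Theta \pi(\theta)\, \mathrm{d}\theta = 1$ while the right-hand side is $\int_{\mathbb{R}^{mn}}\pi^\star(z)\, \mathrm{d}z$.

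The main obstacle, and the reason Assumption~\ref{ass:Base} is stated the way it is, is the measure-theoretic treatment of the singular locus $S = \{\theta \in \Theta : \det J(\theta) = 0\}$, on which the factor $1/\det J(\theta)^{1/2}$ is ill-defined. The assumption $\pi/\det J^{1/2} \in L^1(\mathbb{R}^d)$, interpreted with value $+\infty$ on $S$, simultaneously forces $S$ to be $\pi$-negligible and renders the fiber integral defining $\pi^\star(z)$ finite for almost every $z$; beyond this, the manipulation is just Tonelli combined with the fiber-constancy of $h\circ F$. The overparameterization hypothesis $d > mn$ is needed so that the coarea formula (rather than the area formula of Theorem~\ref{thm:area}) is the appropriate instrument.
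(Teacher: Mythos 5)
Your proposal is correct and follows essentially the same route as the paper: a single application of the coarea formula (Theorem~\ref{thm:coarea}) with the fiber-constant likelihood factor pulled out of the inner integral yields \eqref{eq:DR}, and a second application with that factor replaced by $1$ shows $\int_{\mathbb{R}^{mn}}\pi^\star(z)\,\dd z = 1$. Your extra remarks on the singular locus of $\det J$ and the role of Assumption~\ref{ass:Base} are consistent with (and slightly more explicit than) the paper's treatment, but do not change the argument.
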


\begin{proof}[Proof of Proposition \ref{thm:DR}]
The expression (\ref{eq:DR}) follows by direct application of the coarea formula: note that
\begin{align*}
\mathcal{Z}_{n,\gamma} = \int_{\Theta} p_{n,\gamma}(y \vert \theta,x) \pi(\theta) \dd \theta
&= \int_{\Theta} c_{n,\gamma}(F(\theta)) e^{-\frac{1}{\gamma} L(F(\theta),y)} \pi(\theta) \dd \theta \\
&= \int_{\mathbb{R}^{mn}} c_{n,\gamma}(z) e^{-\frac{1}{\gamma} L(z,y)} \pi^\star (z) \dd z ,
\end{align*}
where
\[
\pi^\star(z) = \int_{F^{-1}(z)} \frac{\pi(\theta)}{\det J(\theta)^{1/2}} \dd \mathcal{H}^{d - mn}(\theta).
\]
Denoting $p_{n,\gamma}^\star(y \vert z) = c_{n,\gamma}(z) e^{-\frac1\gamma L(z,y)}$, $p_{n,\gamma}^\star$ is a density in $y$. Furthermore,
$
\mathcal{Z}_{n,\gamma} = \mathcal{Z}^\star_{n,\gamma} $ where $\mathcal{Z}^\star_{n,\gamma}= \int_{\mathbb{R}^{mn}} p_{n,\gamma}^\star (y \vert z) \pi^\star (z) \dd z.
$
In order to see that $\mathcal{Z}^\star_{n,\gamma}$ is itself a marginal likelihood, we need to show that $\pi^\star$ is a probability density. However, this is immediate, since a further application of the coarea formula tells us that $1 = \int_{\Theta} \pi(\theta) \dd \theta = \int_{\mathbb{R}^{mn}} \pi^\star(z) \dd z$. %
\end{proof}

\begin{example*}[\textsc{Overparameterized Linear Regression}]
We return to the setting of least-squares linear regression with $m=1$. Note that $F^{-1}(z) = \{X^+ z + w\,:\, w \in \ker(X)\}$. %
The Meigniez condition is satisfied if $X$ is full rank, whereby under the rank--nullity theorem, $\ker(X)$ is a $(d-n)$-dimensional vector space. %
Indeed, there exists a semi-orthogonal matrix $Q \in \mathbb{R}^{d\times (d-n)}$ such that $\ker(X) = \text{Range}(Q)$,
and the dual prior is given by the $(d-n)$-dimensional Radon transform of $\pi$:
\[
\pi^\star(z) = \frac{1}{\det(XX^\top)^{1/2}} \int_{\mathbb{R}^{d-n}} \pi(X^+z + Q w) \dd w.
\]
Duality follows from the Fourier transform $\mathcal{F}$ on $L^1$, as $(\mathcal{F}\pi^{\ast})(z)=(\mathcal{F}\pi)(X^{\top}z)$ for $z \in \mathbb{R}^n$. If $\pi$ is the density of a zero-mean normal distribution with covariance $\tau I$, then $\pi^\star$ is the density of a zero-mean normal distribution with covariance $\tau X X^\top$, or equivalently,
\begin{equation}
\label{eq:DualPriorLinear}
\pi^\star(z) = \frac{1}{(2\pi\tau)^{n/2}\det(XX^{\top})^{1/2}} \exp\left(-\frac1{2\tau}\|X^+ z\|^2\right).
\end{equation}
\end{example*}

Proposition~\ref{thm:DR} asserts that in the overparameterized regime, there exists a dual model over the level sets of $F$ with the same marginal likelihood. The mechanics of the proof of Proposition \ref{thm:DR} are not new; for example, the area and coarea formulae have been used in Monte Carlo methods to study algorithms moving between level sets \cite{CG16}, the existence of a limiting posterior distribution in the low temperature limit $\gamma \to 0^+$ \cite{AGT23,de2021quantitative,diaconis2013sampling}, studying stochastic optimization methods for overparameterized models \cite{de2021quantitative}, and other computations of free energy \cite[\S3.3.2]{stoltz2010free}.
To our knowledge however, the characterization of (\ref{eq:DR}) as a form of duality and the conditions for regularity of $\pi^\star$, which enable the following proposition, are both novel.

\begin{proposition}[\textsc{Smoothness of the Dual Prior}]
\label{thm:regularity}
Suppose that $d>mn$ under Assumption \ref{ass:Base}, and either the Meigniez or the Ehresmann condition holds.
Then $\pi^\star$ is $\mathcal{C}^{\infty}$-smooth on $F(\Theta)$.
\end{proposition}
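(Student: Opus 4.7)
The strategy is to regard $F:\Theta\to F(\Theta)$ as a smooth surjective submersion (by the full-rank hypothesis in Condition~\ref{ass:GlobalReg}, which also implies $F(\Theta)$ is open in $\mathbb{R}^{mn}$), locally trivialize it via a fibration theorem, rewrite $\pi^\star$ as a parametric integral over a fixed model fiber, and differentiate under the integral sign.

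For the trivialization, I would split into the two cases allowed by Condition~\ref{ass:GlobalReg}. Under the Ehresmann condition, $F$ is proper, so Ehresmann's fibration theorem makes $F$ a locally trivial smooth fiber bundle, and the fibers $F^{-1}(z)$ are compact (as preimages of singletons). Under the Meigniez condition, the theorem of Meigniez covers the non-proper case and yields the same local triviality conclusion, with fibers diffeomorphic to $\mathbb{R}^{d-mn}$. Either way, fixing $z_0\in F(\Theta)$, there is an open neighborhood $U\ni z_0$ and a diffeomorphism $\Phi:U\times N\to F^{-1}(U)$ satisfying $F\circ\Phi(z,\cdot)=z$, where $N$ is a fixed model fiber isomorphic to $F^{-1}(z_0)$.

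Applying the area formula (Theorem~\ref{thm:area}) to $\Phi_z\coloneqq\Phi(z,\cdot):N\to F^{-1}(z)\subset\mathbb{R}^d$ expresses
\[
\pi^\star(z)=\int_N h(z,t)\,\dd\mathcal{H}^{d-mn}(t),\qquad h(z,t)\coloneqq \frac{\pi(\Phi(z,t))\,\sqrt{\det\bigl(D_t\Phi(z,t)^\top D_t\Phi(z,t)\bigr)}}{\sqrt{\det J(\Phi(z,t))}},
\]
and $h$ is jointly $\mathcal{C}^\infty$ on $U\times N$ because $\pi$, $F$, and $\Phi$ are smooth and $\det J>0$ on $\Theta$. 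Smoothness of $\pi^\star$ at $z_0$ then follows by iterated application of Leibniz's rule, once every partial $\partial_z^\alpha h(z,\cdot)$ admits a locally uniform $L^1(N)$ majorant; since $z_0\in F(\Theta)$ is arbitrary, the global claim is obtained.

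The Ehresmann case is immediate: $N$ is compact, so $\partial_z^\alpha h$ is bounded on $\overline{U}\times N$ and Leibniz's rule applies without further effort. The main obstacle is the Meigniez case, where $N\cong\mathbb{R}^{d-mn}$ is non-compact. Here I would use the chain rule to write $\partial_z^\alpha h(z,t)$ as a polynomial in spatial derivatives of $g\coloneqq \pi/\sqrt{\det J}$ evaluated at $\Phi(z,t)$, with coefficients that are smooth on $\overline{U}\times N$, and then use Assumption~\ref{ass:Base} together with the coarea formula (Theorem~\ref{thm:coarea}) to promote integrability of $g$ along fibers to locally uniform fiberwise integrability of each such polynomial expression. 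The delicate point is that Assumption~\ref{ass:Base} directly controls only $g$ itself and not its derivatives; the way around it is to exploit the structural form of the Meigniez trivialization (affine in the linear-model archetype illustrated after Proposition~\ref{thm:DR}), which recasts $z$-derivatives of $h$ as tangential translations of $g$ along fibers whose integrability is inherited from Assumption~\ref{ass:Base} after possibly shrinking $U$.
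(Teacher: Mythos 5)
Your proposal follows essentially the same route as the paper: both invoke the Meigniez and Ehresmann fibration theorems (Theorems~\ref{thm:Meigniez} and \ref{thm:Ehresmann}) to obtain, near an arbitrary $z_0\in F(\Theta)$, a smooth local trivialization $\varphi$ carrying a reference fiber onto $F^{-1}(z)$, and then conclude by differentiating under the integral sign; the full-rank hypothesis plus the Submersion Theorem gives that each fiber is a $(d-mn)$-submanifold, and the Hausdorff integral is traded for the volume-form integral exactly as you do. The one substantive divergence is in how the last step is executed: the paper applies the Leibniz integral rule for integrals over a smoothly flowing domain (Theorem~\ref{thm:Leibniz}) directly to $\int_{\varphi(z,\Omega)} g\,\dd V$ with $g=\pi/\det J^{1/2}$, and does not pull back to a fixed model fiber via the area formula, so no locally uniform $L^1$ majorants for $\partial_z^\alpha h$ are constructed. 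Your identification of the non-compact Meigniez case as the delicate point is fair, but the specific escape you sketch---exploiting an ``affine'' structure of the Meigniez trivialization---does not hold beyond the linear-regression archetype: Theorem~\ref{thm:Meigniez} only provides a smooth fiber bundle structure, with no affine or translation-like form for $\varphi(z,\cdot)$ in general, so that step as written would not go through. Since the paper's own argument rests on the cited Leibniz rule rather than on any such structure, your proof matches the paper's up to that point, and the unproven claim in your final paragraph is both unnecessary for the paper's level of argument and not salvageable in the generality you state it.
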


\begin{proof}
Let $z_0 \in F(\mathbb{R}^d)$ be arbitrary and let $\Omega = F^{-1}(z_0)$. Since Condition~\ref{ass:GlobalReg} asserts that $DF$ is full-rank on $\mathbb{R}^d$,  the Submersion Theorem \cite[Corollary 5.13]{lee2012smooth} implies that $F^{-1}(z)$ is a $(d-mn)$-submanifold in $\mathbb{R}^d$ for any $z$ in the image of $F$. Therefore,
\[
\int_{F^{-1}(z)} g(\theta) \dd \mathcal{H}^{d-mn}(\theta) = \int_{F^{-1}(z)} g(\theta) \dd V, \quad\text{where } g(\theta) = \frac{\pi(\theta)}{\det J(\theta)^{1/2}},
\]
and $\dd V$ is the associated volume form. Both the Ehresmann condition (Theorem \ref{thm:Ehresmann}) and the Meigniez condition (Theorem \ref{thm:Meigniez}) imply that there is a neighbourhood $N_0 \subseteq \mathbb{R}^{mn}$ of $z_0$ and a $\mathcal{C}^{\infty}$-smooth map $\varphi\,:\,N_0 \times \mathbb{R}^d$ such that $\varphi(z,\Omega) = F^{-1}(z)$ for all $z \in N_0$. Therefore,
\[
\pi^\star(z) = \int_{F^{-1}(z)} g(\theta) \dd V = \int_{\varphi(z,\Omega)} g(\theta) \dd V.
\]
From the Leibniz integral rule, Theorem \ref{thm:Leibniz}, $\pi^\star$ is $\mathcal{C}^{\infty}$-smooth in $N_0$. Since $z_0$ was arbitrary, $\pi^\star \in \mathcal{C}^{\infty}(F(\mathbb{R}^d))$.
\end{proof}

\section{The Interpolating Information Criterion}\label{sed:IIC}

Using Propositions~\ref{thm:DR} and \ref{thm:regularity}, it is possible to study properties of overparameterized systems using classical techniques. 
For our purposes, one of the most significant use cases of Propositions \ref{thm:DR} and \ref{thm:regularity} is that they readily enable expansions of the marginal likelihood in the temperature $\gamma$ via Laplace's method. Since $p_{n,\gamma}^\star$ is often locally log-concave, the dual formulation (\ref{eq:DR}) allows us to easily consider the marginal likelihood in the \textit{cold posterior} limit as $\gamma$ approaches $0$ from above --- a distribution 
whose support is $\mathcal{M}$ \cite{de2021quantitative}. 

This limit has particular significance in machine learning settings for models obtained as a result of stochastic optimization, where $\gamma$ represents the annealed temperature of the optimization as it is reduced to zero, concentrating solutions onto the set of optima \cite{mandt2017stochastic,kenyon,robbins1951stochastic}. %
In this regime, our dual representation \eqref{eq:DR} allows us to follow similar steps to the derivation of other Bayesian information criteria \cite{konishi2008information} to construct a new information criterion. However, since $\pi^\star$ is also generally intractable to compute explicitly, we need to invoke another approximation to obtain a point estimate. To do so, we will consider a family of concentrating priors $\{\pi_\tau\}_{\tau \in (0,1]}$ satisfying $\pi_\tau(\theta) \propto \pi(\theta)^{1/\tau}$.
This lets us consider corresponding families of concentrating marginal likelihoods and their approximations on compact sets, given by
\begin{align*}
\mathcal{Z}_{n,\gamma,\tau} = \int_{\Theta} p_{n,\gamma}(y \vert x,\theta) \pi_\tau(\theta) \dd \theta
\qquad\text{and}\qquad
\mathcal{Z}_{n,\gamma,\tau}^K = \int_{\mathbb{R}^{mn}} p_{n,\gamma}^\star(y \vert z) \pi^\star_{K,\tau}(z) \dd z,
\end{align*}
where we naturally define
\[
\pi^\star_{K,\tau}(z) = \int_{F^{-1}(z) \cap K} \frac{\pi_\tau(\theta)}{\det J(\theta)^{1/2}} \dd \mathcal{H}^{d-mn}(\theta).
\]
Naturally, we also let $\mathcal{F}_{n,\gamma,\tau} = -\log \mathcal{Z}_{n,\gamma,\tau}$ and $\mathcal{F}_{n,\gamma,\tau}^K  = -\log \mathcal{Z}_{n,\gamma,\tau}^K$. Our first technical lemma demonstrates regularity of the approximation $\pi_{K,\tau}^{\star}$ around $z = y$.
\begin{lemma}
\label{lem:CompactApprox1}
Let $K \subseteq \Theta$ be a compact set 
 such that $K \cap \mathcal{M} \neq \emptyset$. Under Assumptions \ref{ass:Base} and \ref{ass:LocalReg}, $\pi_{K,\tau}^{\star}$ is $\mathcal{C}^{\infty}$-smooth in a neighbourhood of $z = y$.
\end{lemma}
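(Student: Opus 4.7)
The plan is to mimic the proof of Proposition~\ref{thm:regularity}, replacing the global submersion hypothesis (Condition~\ref{ass:GlobalReg}) with the local submersion structure supplied by Assumption~\ref{ass:LocalReg} together with compactness of $K$. Since $DF$ is full-rank on $\mathcal{M}$ and $F$ is smooth, $F$ is a submersion on some open neighbourhood $U \supseteq \mathcal{M}$. The set $\mathcal{M} \cap K$ is compact as a closed subset of $K$, and a standard contradiction/compactness argument then supplies a neighbourhood $N_y \subseteq \mathbb{R}^{mn}$ of $y$ such that $F^{-1}(z) \cap K \subseteq U$ for every $z \in N_y$: otherwise one extracts a sequence $\theta_k \in K \setminus U$ with $F(\theta_k) \to y$, whose subsequential limit lies in $\mathcal{M} \cap K \subseteq U$, a contradiction.

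Next, for each $\theta \in \mathcal{M} \cap K$ the Submersion Theorem yields a neighbourhood $V_\theta \subseteq U$ and a $\mathcal{C}^\infty$ diffeomorphism $\varphi_\theta$ with $\varphi_\theta(z,\Omega_\theta) = F^{-1}(z) \cap V_\theta$ for all $z$ in a neighbourhood of $y$. Extract a finite subcover $V_1, \ldots, V_p$ of $\mathcal{M} \cap K$ together with a smooth partition of unity $\{\psi_j\}$ subordinate to $\{V_j\}$ on an open neighbourhood $W \supseteq \mathcal{M} \cap K$, and shrink $N_y$ further so that $F^{-1}(z) \cap K \subseteq W$ for every $z \in N_y$. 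Writing
\[
\pi_{K,\tau}^{\star}(z) = \sum_{j=1}^{p} \int_{F^{-1}(z) \cap V_j \cap K} \psi_j(\theta)\, \frac{\pi_\tau(\theta)}{\det J(\theta)^{1/2}} \, \dd \mathcal{H}^{d-mn}(\theta),
\]
each summand pulls back via $\varphi_j$ to an integral of a $\mathcal{C}^\infty$ integrand over a bounded domain in $\mathbb{R}^{d-mn}$, after which the Leibniz integral rule (Theorem~\ref{thm:Leibniz}) delivers $\mathcal{C}^\infty$-dependence on $z \in N_y$.

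The hard part is controlling the interaction between the cut-off by $K$ and the foliation $\{F^{-1}(z)\}_{z \in N_y}$: if $\mathcal{M} \cap \partial K \neq \emptyset$ and $\partial K$ is not transverse to nearby level sets of $F$, then the effective domain of integration in each chart can vary non-smoothly with $z$ and invalidate the Leibniz step. The cleanest resolution, consistent with the role of $K$ as a technical compactification device used only to localize the integral around an interpolator, is to choose $K$ so that $\mathcal{M} \cap K \subseteq K^\circ$ (which can always be arranged by an arbitrarily small enlargement of $K$); the charts $V_j$ can then be taken inside $K^\circ$, rendering the cut-off locally invisible to the integral and yielding the claim.
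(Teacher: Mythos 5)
Your proof takes a genuinely different route from the paper's. The paper observes that the restriction $F_K:K\to F(K)$ automatically inherits the Ehresmann compactness property (preimages of compact sets are closed and bounded, hence compact), so the Ehresmann fibration theorem (Theorem~\ref{thm:Ehresmann}) supplies a single smooth trivialization of the fibers near $y$ in one shot, after which the Leibniz integral rule finishes the argument just as in Proposition~\ref{thm:regularity}. You instead build the trivialization by hand: local submersion charts at each point of $\mathcal{M}\cap K$, a finite subcover by compactness, and a subordinate partition of unity, then Leibniz on each chart. Both arguments reach the same conclusion; the paper's is shorter because Ehresmann packages the compactness/covering step, while yours is more self-contained and makes the mechanism explicit.

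The point you flag about the boundary $\partial K$ is a real one, and in fact the paper's proof is silent on it as well: applying Ehresmann to $F_K$ treats $K$ as a manifold with boundary, and fibration theorems for such domains require the fibers to be transverse to $\partial K$ (equivalently, $F_K|_{\partial K}$ to be a submersion), which is not verified. Your observation that without such transversality the effective domain of integration can vary non-smoothly in $z$ is correct, and your compactness argument showing $F^{-1}(z)\cap K\subseteq U$ (and, after your modification, $F^{-1}(z)\cap K\subseteq K^\circ$) for $z$ near $y$ is sound. The only mismatch is that your fix quietly strengthens the hypothesis: the lemma fixes an arbitrary compact $K$ with $K\cap\mathcal{M}\neq\emptyset$, whereas you require $\mathcal{M}\cap K\subseteq K^\circ$. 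For the lemma as stated this is a genuine strengthening rather than a harmless rearrangement, since enlarging $K$ changes $\pi^\star_{K,\tau}$; but it is compatible with the way the lemma is actually deployed in Lemma~\ref{lem:PriorProps}, where $K$ is a ball of the authors' choosing and can trivially be taken so that $\mathcal{M}$ does not meet $\partial K$. It would be worth stating that caveat explicitly rather than leaving it as "can always be arranged."
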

\begin{proof}
Let $F_K\,:\,K \to F(K)$ denote the restriction of $F$ to the set $K$. By hypothesis, $F_K$ is a submersion in a neighbourhood $N$ of $\mathcal{M} \cap K$. For $E \subset F(K)$ compact, we have that $F_K^{-1}(E) = F^{-1}(E) \cap K$, which is bounded and closed by continuity of $F$ (Assumption \ref{ass:Base}), and so by the Heine-Borel Theorem, $F_K^{-1}(E)$ is compact. The Ehresmann condition is satisfied for $F_K$, and so there exists a neighbourhood $N_0$ of $y$ and a smooth map $\varphi:N_0 \times \mathcal{M}$ such that $\varphi(z,\mathcal{M}) = F^{-1}(z)$ for $z \in N_0$. %
As in the proof of Proposition~\ref{thm:regularity}, the Leibniz integral rule implies that $\pi_{K,\tau}^\star$ is $\mathcal{C}^{\infty}$-smooth in $N_0 \cap F(N)$.
\end{proof}

There are now two temperatures $\gamma$ and $\tau$---reducing them at differing rates is known to produce different approximations \cite{fulks1951generalization}. If $\gamma$ and $\tau$ are reduced at similar rates, the marginal likelihood will concentrate around (\ref{eq:MAP}). However, in light of Proposition \ref{thm:DR}, a simpler approach first invokes the cold posterior limit $\gamma \to 0^+$, after which $\tau$ is reduced. This instead concentrates the marginal likelihood around (\ref{eq:INT}), suggesting a criterion that is well-equipped to compare model performance among interpolators. Consequently, we refer to this as the \emph{interpolating regime}. %
To proceed we will need to impose some mild assumptions on the second order behavior of $\ell$, and the second order and asymptotic behavior of the base prior $\pi$. This will ensure that the Laplace approximation and the curvature terms appearing in the IIC are well-defined, with the prior concentrating fast enough to take approximations on compacta. 

To perform a Laplace approximation over the submanifold $\mathcal{M}$ requires the manifold Hessian. Let $\Pi(\theta) = I - DF(\theta)^\top J(\theta)^{-1} DF(\theta)$ denote the projection matrix mapping $\Theta$ into the tangent space of $\mathcal{M}$ at $\theta$, and write its compact singular value decomposition as $U_\theta U_\theta^\top$, so $U_\theta$ maps $\mathbb{R}^{d-mn}$ into the tangent space of $\mathcal{M}$ at $\theta$. For $\theta \in \mathcal{M}$ and twice-differentiable $R:\mathbb{R}^d \to \mathbb{R}$, we define the manifold Hessian $\nabla_{\mathcal{M}}^2 R(\theta) \in \mathbb{R}^{(d-mn)\times (d-mn)}$ through its action on vectors $u \in \mathbb{R}^{d-mn}$:
\begin{equation}
\label{eq:ManifoldHessian}
\nabla^2_{\mathcal{M}} R(\theta) u = U_\theta^\top (\nabla^2 R(\theta) U_\theta u - \nabla_{U_\theta u} \Pi(\theta) \nabla R(\theta))\;:\; \mathcal{M} \times \mathbb{R}^{d-mn} \to \mathbb{R}^{d-mn}.
\end{equation}
Note that $\nabla_{\mathcal{M}}^2 R(\theta) = I$ in the setting of linear regression with $R(\theta) = \frac12 \|\theta\|^2$. 
This allows us to derive the following Laplace approximation on constrained submanifolds.

\begin{proposition}[\textsc{Laplace Approximation on Constrained Submanifolds}]
\label{prop:LaplaceManifold}
Suppose that $\mathcal{M} = F^{-1}(y)$ where $F:\mathbb{R}^d \to \mathbb{R}^{mn}$. Let $\eta,Q:\mathbb{R}^d \to [0,\infty)$ be smooth, and assume $\eta$ attains a unique global minimum on $\mathcal{M}$ at $\theta^\star$ and $\nabla_{\mathcal{M}}^2 \eta$ is non-singular. Then %
\begin{equation}
\label{eq:LaplaceManifold}
\int_{\mathcal{M}} e^{-\frac1\tau \eta(\theta)} Q(\theta) \dd \mathcal{H}^{d-mn}(\theta) = (2\pi\tau)^{\frac{d-mn}{2}} e^{-\frac1\tau \eta(\theta^\star)} Q(\theta^\star) \det(\nabla^2_{\mathcal{M}} \eta(\theta^\star))^{-1/2}[1 + \mathcal{O}(\tau)].
\end{equation}
\end{proposition}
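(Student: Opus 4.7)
The plan is to reduce \eqref{eq:LaplaceManifold} to a classical Euclidean Laplace expansion by choosing a smooth local parameterization of $\mathcal{M}$ near $\theta^{\star}$. Since $\nabla_{\mathcal{M}}^2 \eta(\theta^\star)$ is well-defined and non-singular, $DF(\theta^\star)$ must be full rank, so the Submersion Theorem guarantees that $\mathcal{M}$ is a smooth $(d-mn)$-submanifold near $\theta^\star$. The Implicit Function Theorem then furnishes a smooth embedding $\psi:U \subseteq \R^{d-mn} \to \mathcal{M}$ with $\psi(0) = \theta^\star$, and one may arrange $D\psi(0) = U_{\theta^\star}$ so that $\psi$ is isometric at the origin.

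First I would split the integral into a neighborhood $V = \psi(U')$ of $\theta^\star$ and its complement $\mathcal{M}\setminus V$. By the area formula (Theorem~\ref{thm:area}), the contribution from $V$ pulls back to
\begin{equation*}
\int_{U'} e^{-\eta(\psi(u))/\tau}\, Q(\psi(u))\sqrt{\det(D\psi(u)^\top D\psi(u))}\,\dd u,
\end{equation*}
which is a classical Laplace integral on $\R^{d-mn}$ with phase $\eta\circ\psi$ and amplitude $Q(\psi(u))\sqrt{\det(D\psi(u)^\top D\psi(u))}$. At $u = 0$ the volume factor equals $1$, since $D\psi(0) = U_{\theta^\star}$ has orthonormal columns, and the amplitude reduces to $Q(\theta^\star)$.

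The main technical content is identifying $\nabla^2(\eta\circ\psi)(0)$ with $\nabla_{\mathcal{M}}^2 \eta(\theta^\star)$ as defined in \eqref{eq:ManifoldHessian}. The chain rule gives
\begin{equation*}
\nabla^2(\eta\circ\psi)(0) = U_{\theta^\star}^\top \nabla^2\eta(\theta^\star) U_{\theta^\star} + \text{(the gradient of } \eta \text{ contracted with } D^2\psi(0)),
\end{equation*}
so the task is to identify the correction term as the contraction of $\nabla\eta(\theta^\star)$ with the second fundamental form of $\mathcal{M}$ at $\theta^\star$. Writing $\alpha(t) = \psi(tu)$, the tangentiality identity $(I - \Pi(\alpha(t)))\alpha'(t) = 0$ differentiated at $t=0$ yields $(I - \Pi(\theta^\star))\alpha''(0) = \nabla_{U_{\theta^\star}u}\Pi(\theta^\star)\cdot U_{\theta^\star}u$. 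Since $\theta^\star$ is a critical point of $\eta$ on $\mathcal{M}$, the gradient $\nabla\eta(\theta^\star)$ lies in the normal space (the range of $I - \Pi(\theta^\star)$), so projecting the relation above against $\nabla\eta(\theta^\star)$ reproduces exactly the correction term in \eqref{eq:ManifoldHessian} (up to the sign convention of the Weingarten map). Once this identification is in place, the standard Euclidean Laplace expansion with fourth-order Taylor remainder produces the leading-order formula on $V$ together with the $\mathcal{O}(\tau)$ relative error.

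The main obstacle is controlling the tail over $\mathcal{M}\setminus V$. By smoothness and uniqueness of the global minimum, after shrinking $V$ one can ensure $\eta(\theta) - \eta(\theta^\star) \geq \delta$ on $\mathcal{M}\setminus V$ for some $\delta > 0$, so the tail admits the bound
\begin{equation*}
e^{-\delta/(2\tau)} \int_{\mathcal{M}} e^{-\eta(\theta)/(2\tau_0)} Q(\theta)\,\dd\mathcal{H}^{d-mn}(\theta)
\end{equation*}
for all $\tau < \tau_0$, which is exponentially smaller than the leading contribution provided this integral is finite. The hypothesis that the left-hand side of \eqref{eq:LaplaceManifold} is finite for some small $\tau$ supplies exactly this; in the applications within the paper, Assumptions~\ref{ass:Base} and \ref{ass:LocalReg} together with the compactification through $\pi_{K,\tau}^{\star}$ guarantee the requisite integrability. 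Combining the local expansion with the negligible tail yields \eqref{eq:LaplaceManifold}.
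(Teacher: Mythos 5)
Your proposal is correct and lands on the same expansion, but it executes the two key steps differently from the paper, so a comparison is worth recording. The paper localizes with an exponential-map atlas: it covers $\mathcal{M}$ by normal-coordinate charts, takes a partition of unity equal to $1$ near $\theta^\star$, applies the Euclidean Laplace lemma in the chart at $\theta^\star$, and obtains the Hessian identification for free from the normal-coordinate identity $\nabla^2(\eta\circ\phi_0)(0)=\nabla^2_{\mathcal{M}}\eta(\theta^\star)$ \cite[Proposition 5.45]{boumal2023introduction}, bounding the off-chart pieces patchwise as $o(\tau^k e^{-\eta(\theta^\star)/\tau})$. You instead take a single implicit-function-theorem chart $\psi$ with $D\psi(0)=U_{\theta^\star}$, pull back via the area formula, and prove the identification of $\nabla^2(\eta\circ\psi)(0)$ with the manifold Hessian by hand, using the differentiated tangency identity and the fact that $\nabla\eta(\theta^\star)$ is normal at the constrained minimizer; this is precisely the extrinsic (second fundamental form) formula the paper records in its appendix following \cite{absil2013extrinsic}, and your hedge about signs is warranted, since your computation (a $+$ correction) agrees with the appendix formula while the displayed definition \eqref{eq:ManifoldHessian} carries a minus sign---a discrepancy internal to the paper, not to your argument. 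Your tail treatment (a single gap bound $\eta\geq\eta(\theta^\star)+\delta$ off $V$ combined with integrability at a fixed $\tau_0$) is arguably cleaner than the paper's chart-by-chart estimates, which must then be summed over possibly infinitely many patches. Two small cautions: retain the factor $e^{-\eta(\theta^\star)(1/\tau-1/(2\tau_0))}$ in your tail bound, since otherwise the comparison with the leading term $\tau^{(d-mn)/2}e^{-\eta(\theta^\star)/\tau}$ can fail when $\eta(\theta^\star)>0$; and note that a uniform gap $\delta>0$ outside a neighbourhood of $\theta^\star$ is not implied by uniqueness of the global minimizer alone when $\mathcal{M}$ is non-compact (the infimum could be approached at infinity), so it is an extra, mild assumption---one the paper's own proof also makes tacitly, so this is not a gap relative to the paper, merely something to state explicitly.
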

\begin{proof}
Let $\mathcal{U}_0$ be an open set on $\mathcal{M}$ such that $\theta^\star \in \mathcal{U}_0$ and the exponential map $\mathrm{Exp}_{\theta^\star}$ is a diffeomorphism from some neighbourhood $\mathcal{V}_0$ of $T_{\theta^\star}\mathcal{M}$ into $\mathcal{U}_0$. Similarly, let $\{\mathcal{U}_\alpha\}$ be a collection of open sets on $\mathcal{M}\setminus \mathcal{U}_0$ such that for each $\mathcal{U}_\alpha$, there is a point $\theta_\alpha$ such that $\mathrm{Exp}_{\theta_\alpha}$ is a diffeomorphism from some neighbourhood $\mathcal{V}_\alpha$ of $T_{\theta_\alpha}\mathcal{M}$ into $\mathcal{U}_\alpha$. The existence of these sets follows from local existence of the exponential map \cite[Proposition 8.2]{kobayashi1963foundations}. By \cite[Theorem 2.23]{lee2012smooth}, there exists a smooth partition of unity $\{\psi_0\}\cup\{\psi_\alpha\}$ of $\mathcal{M}$ over the sets $\{\mathcal{U}_0\}\cup\{\mathcal{U}_\alpha\}$, which can be modified to ensure that $\psi_0(\theta) = 1$ on some open neighbourhood $N_{\theta^\star}\subset\mathcal{U}_0$ containing $\theta^\star$. For each $\alpha$, let $\phi_\alpha(w) = \mathrm{Exp}_{\theta_\alpha}(U_\alpha w)$, where $U_\alpha$ is the semi-orthogonal matrix mapping $\mathbb{R}^{d-n}$ into $T_{\theta_\alpha}\mathcal{M}$. Define $\phi_0(w)$ similarly, replacing $U_\alpha$ with $U_0$ and $T_{\theta_\alpha}\mathcal{M}$ with $T_{\theta^\star}\mathcal{M}$. Then,
\[
\int_{\mathcal{M}}e^{-\frac1\tau \eta(\theta)} Q(\theta) \dd V = I_1 + \sum_\alpha I_{2,\alpha},
\]
where
\[
\begin{aligned}
    I_1 &= \int_{U_0^\top \mathcal{V}_0} e^{-\frac1\tau \eta(\phi_0(w))} Q(\phi_0(w)) \psi_0(\phi_0(w)) \sqrt{\det G_0(w)} \dd w\\
    I_{2,\alpha} &= \int_{U_\alpha^\top \mathcal{V}_\alpha} e^{-\frac1\tau \eta(\phi_\alpha(w))} Q(\phi_\alpha(w)) \psi_\alpha(\phi_\alpha(w)) \sqrt{\det G_\alpha(w)} \dd w, 
\end{aligned}
\]
with $G_\alpha(w) = D\phi_\alpha(w)^\top D\phi_\alpha(w)$ and similarly for $G_0(w)$. Applying Laplace's method 
\cite[Theorem 15.2.5]{simon2015advanced} to $I_1$, since $\eta \circ \phi_\alpha$ attains a unique minimum at zero, %
\[
I_1 = (2\pi\tau)^{\frac{d-mn}{2}} e^{-\frac1\tau \eta(\theta^\star)} Q(\theta^\star) \det(\nabla^2(\eta \circ \phi)(\theta^\star))^{-1/2}[1 + \mathcal{O}(\tau)],
\]
which is equal to the right-hand side of (\ref{eq:LaplaceManifold}) by \cite[Proposition 5.45]{boumal2023introduction}.
Now consider each $I_{2,\alpha}$, and observe that for any $k > 1$,
\[
\tau^{-k} I_{2,\alpha} e^{\frac1\tau \eta(\theta^\star)} = \int_{U_\alpha^\top \mathcal{V}_\alpha} \tau^{-k} e^{-\frac1\tau [\eta(\phi_\alpha(w)) - \eta(\theta^\star)]} Q(\phi_\alpha(w)) \psi_\alpha(\phi_\alpha(w)) \sqrt{\det G_\alpha(w)} \dd w.
\]
Since $e^{-[\eta(\phi_\alpha(w)) - \eta(\theta^\star)]} < 1-\delta$ for some $\delta > 0$ on $U_\alpha^\top\mathcal{V}_\alpha$, for any $0 < \tau < 1$ there holds
\[
\tau^{-k} I_{2,\alpha} e^{\frac1\tau \eta(\theta^\star)} < \tau^{-k} (1-\delta)^{\frac{1}{\tau}-1} \int_{U_\alpha^\top \mathcal{V}_\alpha} e^{-[\eta(\phi_\alpha(w)) - \eta(\theta^\star)]} Q(\phi_\alpha(w)) \psi_\alpha(\phi_\alpha(w)) \sqrt{\det G_\alpha(w)} \dd w,
\]
which converges to zero as $\tau \to 0^+$. Hence, $I_{2,\alpha} = o(\tau^k e^{-\frac1\tau \eta(\theta^\star)})$ as $\tau \to 0^+$ for any $k > 0$ and each $\alpha$, which implies the result.
\end{proof}

\begin{assumption}
\label{ass:UniquePrior}
There are \textbf{unique} parameters $\theta_0,\theta^\star \in \Theta \subseteq \mathbb{R}^d$ satisfying 
\[
\theta_0 = \argmax_{\theta \in \Theta} \pi(\theta),\qquad 
\theta^\star = \argmax_{\theta \in \mathcal{M}} \pi(\theta).
\]
In particular, $\theta^\star$ is the unique interpolator solving (\ref{eq:INT}) with $R(\theta) = -\log \pi(\theta)$. Furthermore, 
\begin{enumerate}[label=(\alph*)]
    \item the Hessian $\nabla_z^2 \ell(z,y)$ is non-singular at $z = y$;
    \item the manifold Hessian $\nabla_{\mathcal{M}}^2 R(\theta^\star)$ is non-singular; and 
    \item the Hessian $\nabla^2 R(\theta_0)$ is non-singular.
\end{enumerate}

\end{assumption}

The following technical lemma establishes some decay properties of $\pi_\tau$ outside a compact set and quantifies the approximation to the corresponding marginal likelihood on these sets.
\begin{lemma}
\label{lem:PriorProps}
Under Assumptions \ref{ass:LocalReg} and \ref{ass:UniquePrior}, there exists a compact set $K \subset \mathbb{R}^d$ with $\mathcal{H}^{d-mn}(K \cap \mathcal{M}) > 0$ %
and $\theta_0 \in K$, and constants $0 < \varrho < 1$ and $C > 0$, such that $|\mathcal{F}_{n,\gamma,\tau} - \mathcal{F}_{n,\gamma,\tau}^K| \leq C \varrho^{1/\tau}$ for sufficiently small $\gamma,\tau > 0$.%
\end{lemma}

\begin{proof}
It will suffice to find a bounded open set $K$ with the required results. To establish the conditions on $K$ we let $r_0 = 1 + \|\theta_0\|$, and $r_1$ be sufficiently large so that $0 \in F(B_{r_1})$, where $B_r$ denotes the open ball of radius $r$ about the origin. Choose $K = B_{\max\{r_0,r_1\}}$. Under the stated assumptions, the conditions of the Compactification Lemma of \cite[Lemma 38]{breitung2006asymptotic} are satisfied for $K$ with $h(\theta) = p_{n,\gamma}(y\vert\theta,x)$ and $f(\theta) = R(\theta)$, and so from \cite[eqn. 5.14]{breitung2006asymptotic},
\[
\frac{\int_{\Theta \setminus K} p_{n,\gamma}(y\vert\theta,x) \pi_\tau(\theta) \dd \theta}{\int_{\Theta \cap K} p_{n,\gamma}(y\vert\theta,x) \pi_\tau(\theta) \dd \theta} \leq e^{-\frac{\epsilon}{2\tau}} \frac{\int_{\Theta \setminus K} p_{n,\gamma}(y\vert\theta,x)\pi(\theta) \dd \theta}{\int_{\Theta \cap K} p_{n,\gamma}(y\vert \theta,x)\pi(\theta) \dd \theta},
\]
where $\epsilon = R(\theta_0) - \sup_{\theta \in \Theta\setminus K}R(\theta) > 0$. Letting $\varrho = e^{-\frac{\epsilon}{2}} \in (0,1)$, in terms of $\mathcal{Z}_{n,\gamma,\tau}$ and $\mathcal{Z}_{n,\gamma,\tau}^K$,
\[
\frac{\mathcal{Z}_{n,\gamma,\tau}}{\mathcal{Z}_{n,\gamma,\tau}^K} - 1 \leq \varrho^{1/\tau} \left(\frac{\mathcal{Z}_{n,\gamma,1}}{\mathcal{Z}_{n,\gamma,1}^K}-1\right).
\]
Since $\mathcal{H}^{d-mn}(K\cap\mathcal{M}) > 0$, $\liminf_{\gamma \to 0^+} \mathcal{Z}_{n,\gamma,1}^K > 0$ and so the right-hand side is uniformly bounded for sufficiently small $\gamma > 0$ by $C \varrho^{1/\tau}$. The result follows from the fact that $\log(1 + x) \leq x$ for $x > 0$.

\end{proof}

We are now in a position to state the main result of this work. The following theorem gives us a first order asymptotic approximation of the Bayes free energy in the interpolating regime.

\begin{theorem}
\label{thm:IIC}
Let $\mathcal{F}_{n,\gamma,\tau} = -\log \mathcal{Z}_{n,\gamma,\tau}$. In the regime where $\gamma \to 0^+$, and then $\tau \to 0^+$, and under Assumptions~\ref{ass:Base}, \ref{ass:LocalReg} and \ref{ass:UniquePrior}, we have
    \[
    \mathcal{F}_{n,\gamma,\tau} = \frac{1}{\tau} [R(\theta^\star) - R(\theta_0)] + \frac12 \log \det J(\theta^\star) + \frac{mn}{2}\log(\tau\pi) + \frac12 \log \mathcal{K}_{\mathcal{M}}^\pi(\theta^\star,\theta_0) + \mathcal{O}(\gamma) + \mathcal{O}(\tau),
    \]
    where the relative curvature factor $\mathcal{K}_{\mathcal{M}}^\pi$ is
\begin{align}\label{eq:curve}
    \mathcal{K}_{\mathcal{M}}^\pi(\theta_1,\theta_2) = \frac{\det(\nabla^2_\mathcal{M}R(\theta_1))}{\det(\nabla^2 R(\theta_2))}. 
\end{align}

\end{theorem}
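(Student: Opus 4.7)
The plan is to combine Bayesian duality (Proposition~\ref{thm:DR}) with an iterated Laplace expansion: one pass over $\mathbb{R}^{mn}$ to handle $\gamma \to 0^+$, and two further passes—one on the submanifold $\mathcal{M}$ via Proposition~\ref{prop:LaplaceManifold} and one on $\mathbb{R}^d$ for the prior normaliser—to handle the subsequent $\tau \to 0^+$. Writing $R = -\log \pi$ and $\pi_\tau = Z_\tau^{-1} e^{-R/\tau}$ with $Z_\tau = \int_{\mathbb{R}^d} e^{-R/\tau}\,\dd \theta$, these three expansions cleanly factorise the final estimate.

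First I would localise: by Lemma~\ref{lem:PriorProps}, pick a compact $K$ containing $\theta_0$ and a neighbourhood of $\theta^\star$ on $\mathcal{M}$, so that $\mathcal{Z}_{n,\gamma,\tau} = \mathcal{Z}_{n,\gamma,\tau}^K + o(\tau^k)$ for every $k>1$ (uniformly as $\gamma \to 0^+$), and so that Lemma~\ref{lem:CompactApprox1} ensures $\pi_{K,\tau}^\star$ is $\mathcal{C}^{\infty}$ near $z=y$. Bayesian duality then gives
\[
\mathcal{Z}_{n,\gamma,\tau}^K = \int_{\mathbb{R}^{mn}} c_{n,\gamma}(z)\, e^{-L(z,y)/\gamma}\, \pi_{K,\tau}^\star(z)\, \dd z.
\]
Assumption~\ref{ass:UniquePrior}(a) places a non-degenerate minimum of $L(\cdot,y)$ at $z=y$, so standard Laplace on $\mathbb{R}^{mn}$ produces a prefactor $c_{n,\gamma}(y)(2\pi\gamma)^{mn/2}\det(\nabla_z^2 L(y,y))^{-1/2}$. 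A parallel Laplace expansion of the definition of $c_{n,\gamma}(y)$ yields an exactly inverse prefactor, so these $\gamma$-dependent constants cancel and we are left with $\mathcal{Z}_{n,\gamma,\tau}^K = \pi_{K,\tau}^\star(y)[1 + \mathcal{O}(\gamma)]$.

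For the $\tau$-limit, note $F^{-1}(y) \cap K = \mathcal{M} \cap K$, so
\[
\pi_{K,\tau}^\star(y) = \frac{1}{Z_\tau} \int_{\mathcal{M} \cap K} \frac{e^{-R(\theta)/\tau}}{\det J(\theta)^{1/2}}\, \dd \mathcal{H}^{d-mn}(\theta).
\]
Proposition~\ref{prop:LaplaceManifold} with $\eta = R$ and $Q = \det J^{-1/2}$ (non-singular manifold Hessian by Assumption~\ref{ass:UniquePrior}(b)) expands the numerator as $(2\pi\tau)^{(d-mn)/2} e^{-R(\theta^\star)/\tau} \det J(\theta^\star)^{-1/2} \det(\nabla_{\mathcal{M}}^2 R(\theta^\star))^{-1/2}[1 + \mathcal{O}(\tau)]$, while the polynomial growth bound of Assumption~\ref{ass:UniquePrior}(c) legitimises a standard Laplace expansion $Z_\tau = (2\pi\tau)^{d/2} e^{-R(\theta_0)/\tau} \det(\nabla^2 R(\theta_0))^{-1/2}[1 + \mathcal{O}(\tau)]$. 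Their ratio reduces the powers of $(2\pi\tau)$ to $(2\pi\tau)^{-mn/2}$, the Hessian determinants bundle into $\mathcal{K}_{\mathcal{M}}^\pi(\theta^\star, \theta_0)^{-1/2}$, and taking $-\log$ of the composite bound from both steps delivers the stated expansion.

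The main obstacle is the prefactor cancellation in the $\gamma$-step: while immediate for the $L^2$ case where $c_{n,\gamma} = (\pi\gamma)^{-mn/2}$, in general it requires the Laplace expansion of $\prod_i \int_{\mathcal{Y}} e^{-\ell(y_i,y')/\gamma}\,\dd y'$ together with the identification $\nabla_z^2 \ell(y,y) = \nabla_{y'}^2 \ell(y,y)$ coming from symmetry of the Hessian at a non-degenerate minimum of a smooth nonnegative function. One must also take care that the $\mathcal{O}(\gamma)$ remainder has constants bounded independently of $\tau$, so that sending $\tau \to 0^+$ afterwards is valid; this is precisely where the smoothness and uniform boundedness of $\pi_{K,\tau}^\star$ on a neighbourhood of $y$ provided by Lemma~\ref{lem:CompactApprox1} enter.
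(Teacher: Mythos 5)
Your proposal follows essentially the same route as the paper's proof: localisation via Lemma~\ref{lem:PriorProps} and smoothness of $\pi^\star_{K,\tau}$ near $z=y$ via Lemma~\ref{lem:CompactApprox1}, Bayesian duality (Proposition~\ref{thm:DR}), a Laplace step in $\gamma$ over $\mathbb{R}^{mn}$ yielding $\mathcal{Z}_{n,\gamma,\tau}=\pi^\star_{K,\tau}(y)[1+\mathcal{O}(\gamma)]+o(\tau^k)$, and then Proposition~\ref{prop:LaplaceManifold} applied to the numerator with $\eta=R$, $Q=\det J^{-1/2}$ together with a classical Laplace expansion of the prior normaliser, exactly as in the paper. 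Your explicit cancellation of $c_{n,\gamma}(y)$ against the Laplace expansion of its defining integral, justified by $\nabla_z^2\ell(y,y)=\nabla_{y'}^2\ell(y,y)$ (which indeed follows from nonnegativity of $\ell$ and its vanishing on the diagonal), is a slightly more careful rendering of the step the paper carries out by rewriting $p^\star_{n,\gamma}(y\vert z)$ as a self-normalised ratio.
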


\begin{proof}[Proof of Theorem \ref{thm:IIC}]
Under Assumption \ref{ass:Base}, from Proposition \ref{thm:DR}, 
$\mathcal{Z}_{n,\gamma,\tau} = \int_{\mathbb{R}^{mn}} p_\gamma^\star(y \vert z) \pi^\star_{K,\tau}(z) \dd z + (\mathcal{Z}_{n,\gamma,\tau} - \mathcal{Z}^K_{n,\gamma,\tau})$. 
Applying the Laplace approximation (Lemma \ref{lem:Laplace}), since the Hessian of $\ell$ is nonsingular, for any function $g$ that is smooth at $y$, 
\begin{align*}
\int_{\mathbb{R}^{n\times m}} p_{n,\gamma}^\star(y \vert z) g(z) \dd z 
&= \int_{\mathbb{R}^{n\times m}}\frac{e^{-\frac{1}{\gamma}L(y,z)}g(z)}{\int_{\mathbb{R}^{n\times m}}e^{-\frac{1}{\gamma}L(y,z)}\dd y} \dd z \\ 
&= \frac{(2\pi\gamma)^{n/2}g(y)\prod_{i=1}^{n}\det(\nabla^{2}\ell(y_{i},y_{i}))^{-1/2}[1+\mathcal{O}(\gamma)]}{(2\pi\gamma)^{n/2}\prod_{i=1}^{n}\det(\nabla^{2}\ell(y_{i},y_{i}))^{-1/2}[1+\mathcal{O}(\gamma)]},
\end{align*}
and so
$\int_{\mathbb{R}^{n\times m}} p_{n,\gamma}^\star(y \vert z) g(z) \dd z =  g(y)[1 + \mathcal{O}(\gamma)]$ as $\gamma \to 0^+$. Since $\pi_{K,\tau}^\star$ is smooth in a neighborhood of $y$, this and Lemma \ref{lem:PriorProps} imply that for any $k$, $\mathcal{Z}_{n,\gamma,\tau} = \pi^\star_{K,\tau}(y)[1 + \mathcal{O}(\gamma)] + o(\tau^k)$. It now only remains to estimate
\[
\pi_{K,\tau}^{\star}(y) = \int_{\mathcal{M}} \frac{\pi_\tau(\theta)}{\det J(\theta)^{1/2}} \dd \mathcal{H}^{d-mn}(\theta) = \frac{\int_{\mathcal{M}} e^{-\frac1\tau R(\theta)} \det J(\theta)^{-1/2} \dd \mathcal{H}^{d-mn}(\theta)}{\int_{\Theta} e^{-\frac1\tau R(\theta)} \dd \theta}.
\]
Applying Proposition \ref{prop:LaplaceManifold} to the numerator integral and Laplace's method to the denominator,
\[
\begin{aligned}
\int_{\mathcal{M}} e^{-\frac1\tau R(\theta)} \det J(\theta)^{-1/2} \dd \mathcal{H}^{d-mn}(\theta) &= (2\pi\tau)^{\frac{d-mn}{2}} e^{-\frac1\tau R(\theta^\star)} \det (\nabla_{\mathcal{M}}^2 R(\theta^\star))^{-1/2} \det J(\theta^\star)^{-1/2} [1 + \mathcal{O}(\tau)] \\
\int_{\Theta} e^{-\frac1\tau R(\theta)} \dd \theta &= (2\pi\tau)^{\frac{d}{2}} e^{-\frac1\tau R(\theta_0)} \det(\nabla^2 R(\theta_0))^{-1/2}[1 + \mathcal{O}(\tau)].
\end{aligned}
\]
The result now follows from $\mathcal{F}_{n,\gamma,\tau} = -\log \pi_{K,\tau}^\star(y) + \mathcal{O}(\gamma) + o(\tau^k)$ where
\[
-\log \pi^\star_{K,\tau}(y) = \frac1\tau[R(\theta^\star) - R(\theta_0)] + \frac{mn}{2} \log \tau + \frac12 \log \det J(\theta^\star) + \frac12 \log \mathcal{K}_{\mathcal{M}}^\pi(\theta^\star,\theta_0) + \mathcal{O}(\tau).
\]
\end{proof}
\begin{remark}
\label{rem:lossequal}
It is important to observe that the asymptotics for $\mathcal{F}_{n,\gamma,\tau}$ do not depend on $\ell$ outside of the interpolating set $\mathcal{M}$. Consequently, any two loss functions that interpolate on the same set will have equivalent leading order asymptotics for the free energy. 
\end{remark}

This leads to the definition of the interpolating information criterion. 
\begin{definition}[\textsc{IIC}]
\label{def:IIC}
Let $\bar{\mathcal{F}}_{n,\tau}$ denote the approximation to the Bayes free energy $\mathcal{F}_{n,\gamma,\tau}$ obtained from Theorem \ref{thm:IIC} by discarding higher-order terms. The interpolating information criterion is given~by
\begin{equation}
\label{eq:IICDefn}
\text{IIC} \simeq \frac{2}{N} \inf_{\tau} \bar{\mathcal{F}}_{n,\tau},
\end{equation}
where $\simeq$ denotes equivalence after all constant terms in $N$ and $d$ are removed. 
\end{definition}
Equation (\ref{eq:IICDefn}) mimics the construction of the Bayesian information criterion (see \cite[\S9.1]{konishi2008information} for example). 
As is standard practice with information criteria \cite[pg. 226]{konishi2008information}, we choose the auxiliary parameter $\tau$ which minimizes our approximation to the Bayes free energy. This procedure is tantamount to empirical Bayes estimation \cite{casella1985introduction}, and is essential to ensure the qualitative behavior of the free energy follows that of the posterior predictive losses in the overparameterized regime \cite{hodgkinson2022monotonicity}. 
A routine calculation reveals $\tau^\star = \frac{2}{N} \log \frac{\pi(\theta_0)}{\pi(\theta^\star)}$ as the optimal value.
The division of the free energy by a factor of $N$ 
is done in view of the fact that 
under the unique scoring rule that is continuous, monotone, and coherent, the $p$-average of the leave-$p$-out cross-validation errors is given by $N^{-1} \mathcal{F}_{n,\gamma,\tau}$ \cite{fong2020marginal}. Computing the IIC under (\ref{eq:IICDefn}) and removing constant terms yields \eqref{eq:IIC}. 

\begin{example*}[\textsc{IIC for Linear Regression}]
Returning again to the setting of least-squares regression with $\theta^\star$ given by the Moore-Penrose pseudoinverse, $\Theta = \mathbb{R}^d$, $R(\theta) = \frac12 \|\theta\|^2$,  $DF(\theta^\star) = X$, and $\mathcal{K}_{\mathcal{M}}^{\pi} = 1$. The IIC is equal to $\inf_{\tau}\lim_{\gamma \to 0^+} \mathcal{F}_{n,\gamma,\tau}$ by \cite[eq. 4]{hodgkinson2022monotonicity} and simplifies to
\begin{equation}
\label{eq:IICLinearReg}
\text{IIC} = 2 \log \|X^+ y\| + \frac{1}{n} \log \det(XX^\top) - \log n,
\end{equation}
and is to be compared to the Bayesian information criterion \cite{schwarz1978estimating} (normalized by $n$) when $n \gg d$:
\begin{equation}
\label{eq:BICLinearReg}
\text{BIC} = 2\log\|(I-XX^{+})y\|+\left(\frac{d}{n}-1\right)\log n.
\end{equation}
\end{example*}

\section{Discussion}\label{sec:Wider}

While Theorem~\ref{thm:IIC} and the associated IIC \eqref{eq:IIC} constitute our main results, several aspects of our approach may be of independent interest, in particular the phenomenon of Bayesian duality in Proposition \ref{thm:DR}.
We conclude by describing how the IIC and Bayesian duality fit in the context of related work.

\subsection{Interpreting the IIC}
\label{sec:Interpretation}

The IIC itself consists of four terms, each penalizing different characteristics of the estimator $\theta^\star$. %
The $\log N$ correction term is clear, so we discuss the remaining in order of their appearance in (\ref{eq:IIC}). For illustration, each term is represented in Figure \ref{fig:intuition_terms}, considering $R(\theta)$ as quadratic.
\begin{enumerate}[label=(\Roman*)]
\item \textbf{Regularization.} %
Since the BIC is constructed in the large-data limit, the Bernstein-von Mises Theorem \cite{le1953some} suggests that terms involving the prior are asymptotically irrelevant. However, the IIC is non-asymptotic in $d$ and $n$. Furthermore, since the likelihood cannot distinguish between points on $\mathcal{M}$, the influence of the prior becomes magnified in the interpolating regime. Therefore, the first term of the IIC plays a similar role to the log-likelihood in the BIC, %
but instead penalizes \emph{prior} mispecification, judging performance by the degree to which the point estimate obtained after fitting to data agrees with the initial data-independent measure $\pi$ (see Figure \ref{fig:intuition_terms} left). 

\item \textbf{Sharpness.} 
One of the most popular theories for overparameterized models emphasizes the importance of \emph{sharpness} \cite{hochreiter1997flat,keskar2016large}: estimators located in regions with flatter log-likelihood typically exhibit reduced test error (see Figure \ref{fig:intuition_terms} center). While typically quantified using the Hessian of the log-likelihood \cite{kaur2023maximum,yao2020pyhessian,yao2018hessian,zhang2018energy}, the Hessian is singular when $d > n$, and is therefore inadequate for measuring local volume \cite{bergamin2023riemannian,wei2022deep,zhang2018energy}. %
The second term of the IIC provides a valid quantifier of sharpness in the interpolating regime involving 
$J(\theta^\star)$, which often satisfies the full-rank assumption in practice. 
In deep learning, $J(\theta^\star)$ is the Gram matrix of the so-called \emph{neural tangent kernel} (NTK) at $\theta^\star$ \cite{jacot2018neural,novak2022fast}. NTKs can be interpreted as linearized approximations of wide neural networks, useful for studying many properties of neural networks both during \cite{fort2020deep,velikanov2021explicit}, and after training \cite{d2020triple,huang2020deep}. As observed in \cite{de2021quantitative}, the determinant is directly connected to prior notions of sharpness, as $J(\theta^\star)$ has the same nonzero eigenvalues, %
up to a scalar factor, as the Hessian of the loss at $\theta^\star$. Consequently, the interpretation of $J(\theta^\star)$ is the same as the underparameterized setting: the smaller its determinant, the flatter the log-likelihood in the neighborhood of $\theta^\star$. Since the determinant depends only on the eigenvalues of the Jacobian, the sharpness term completely encodes spectral information in the model \cite{liao2021hessian,martin2021implicit} relevant for predicting performance \cite{martin2021predicting}. 

\item \textbf{Relative Curvature.} While the regularization term encapsulates global properties of the prior, the relative curvature term captures local incompatibility between the prior and the log-likelihood, by comparing the curvature of $R$ over the ambient space $\Theta$ to its curvature over $\mathcal{M}$
(see Figure \ref{fig:intuition_terms} right). The manifold Hessian $\nabla_{\mathcal{M}}^2 R$ is \emph{intrinsic} to the zero-loss set $\mathcal{M}$, and unlike the sharpness term, does not depend on any properties of $F$ other than its specification of $\mathcal{M}$. From an extrinsic viewpoint, the manifold Hessian depends on the second fundamental form \cite[eq. 5.42]{boumal2023introduction}, and so relates to other notions of curvature via the Gauss--Codazzi equations \cite[Proposition 4.1]{kobayashi1969foundations}. 
The influence of geometric properties of the likelihood on model performance is well-studied \cite{amari2016information}, including other types of Riemannian Laplace approximations \cite{bergamin2023riemannian}. The IIC expands on these connections as they arise from the zero-temperature limit.
\end{enumerate}

\subsection{Double Descent and the Occam Factor}
\label{sec:DD}

Bayesian methods inherently embody Occam's razor (see e.g., \cite{mackay1992bayesian}) as the marginal likelihood depends strongly on the volume around the maximum \emph{a posteriori} estimator.
As per the bias-variance tradeoff, the estimator prediction error first decreases with reduced bias as the model class becomes more complex, and then past some optimal point increases due to variance.  %
For $n \gg d$, the tradeoff is reflected in the $\mathcal{O}(d\log n)$ penalty term in the BIC. However, even for least-squares linear regression, the peak in prediction error observed around $d=n$ may be transient---appropriately chosen estimators for well-specified but overparameterized models can in fact improve a model's predictive capacity as $d$ grows indefinitely. This phenomenon is commonly referred to as either \emph{double descent} \cite{AS17_TR,belkin2019reconciling,derezinski2020exact,liao2020random} or \emph{benign overfitting} \cite{bartlett2020benign,tsigler2020benign}. %
Results of this flavor are typically proven in the setting where $d \sim c n$ as $d,n\to\infty$ \cite{hastie2022surprises}. 
For finite-dimensional Gaussian processes with fixed $n$, the marginal likelihood itself turns out to be monotone in $d$ when the prior hyperparameter (represented by $\tau$ in Section \ref{sed:IIC}) is optimized via an empirical Bayes procedure \cite{hodgkinson2022monotonicity}. Thus, a litmus test for the prospective utility of any information criterion in this regime is that when applied to overparameterized linear regression (including random feature models), it should reflect these behaviors. Figure \ref{fig:DDRFF}---whose details are outlined in the following section---demonstrates that the IIC passes this test. 

However, whilst the example provided is considered for the sake of concreteness, IIC applies far beyond the case of linear/kernel regression. 
Indeed, Bayesian duality (Proposition \ref{thm:DR}) helps explain the widely-observed occurrence of double descent and benign overfitting phenomena. 
To see this, assume the likelihood is of the usual form $p(y\vert X,\theta) = c_n e^{-n \mathcal{L}(\theta)}$ for some multiplicative constant $c_n$---for example, $\mathcal{L}(\theta) = \frac{1}{n}L(F(\theta),y)$. 
Suppose that a sequence of priors $\pi_d$ is chosen such that the corresponding prior duals $\pi
_d^\star$ take the form $c_d^\star e^{-d \mathcal{R}(z)}$ (that is, the prior duals concentrate as dimension increases). By \eqref{eq:DualPriorLinear}, this is true in the case of linear regression for zero-mean Gaussian priors with covariance $d^{-1}I$. 
Then, for fixed $\gamma > 0$, 
\[
\mathcal{Z}_{n}=c_{n}\int_{\mathbb{R}^{d}}e^{-n\mathcal{L}(\theta)}\pi_d(\theta)\dd \theta =  c_d^\star\int_{\mathbb{R}^{n}}e^{-d\mathcal{R}(z)}p^\star(y\vert z)\dd z = \mathcal{Z}_{n}^{\star}.
\]
That is, for an appropriate sequence of priors, in the overparameterized regime, the roles of dimension and sample size interchange. 
This suggests the possibility that for any class of overparameterized models, there exist corresponding priors such that marginal likelihood \emph{increases} with dimension, counter to the expected behavior, and resembling double descent phenomena.

\section{Numerical Experiments}
\label{sec:Numerics}

Although the IIC is derived from the marginal likelihood, it is unclear whether it correlates with other measures of out-of-sample prediction error; namely, the mean-squared error (MSE) over a test set. In this section, we consider several numerical examples to illustrate the efficacy of the IIC as a measure of model quality by comparing its behavior to more familiar metrics. 

Our experiments begin with generalized linear models, where the IIC and MSE are computed for the canonical example of overparameterized linear regression. Gamma regression is also considered to compare the IIC to evaluating a more complex (test) log-likelihood. 
We then investigate the behavior of the IIC for a family of models that are known to perform poorly in the interpolating regime: polynomial regression. In this setup, the IIC does \emph{not} exhibit the second descent curve, predicting that performance should not increase with overparameterization, as expected.
Finally, with an eye toward machine learning applications, we examine the recently introduced \emph{diagonal linear neural networks}, demonstrating that the IIC correlates with (an appropriately rescaled) MSE.

\subsection{Linear and Gamma Regression}
\label{sec:LinGamReg}

\begin{figure}
\centering
\includegraphics[width=0.38\textwidth]{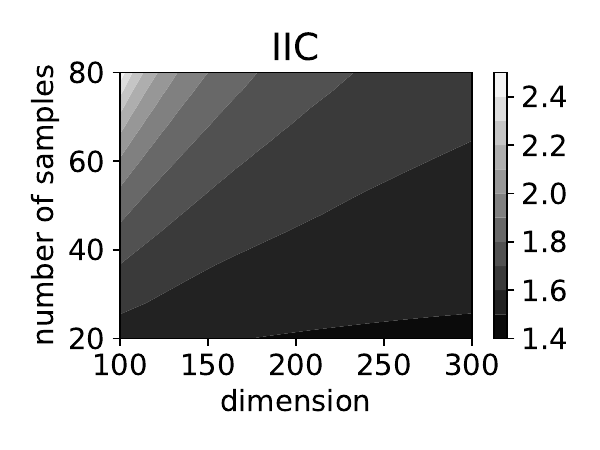}\qquad\qquad
\includegraphics[width=0.38\textwidth]{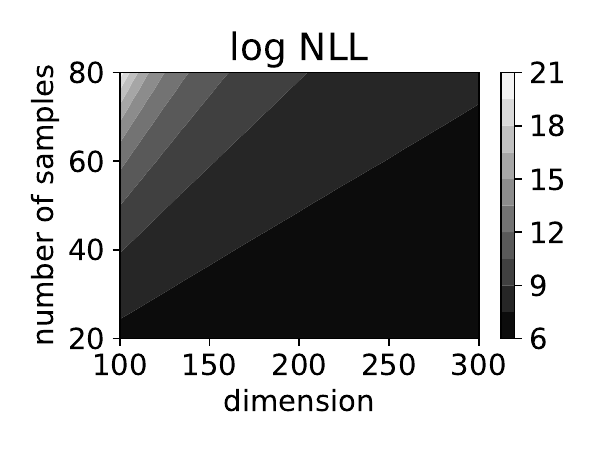}
\caption{\label{fig:GammaReg} IIC (left) and log test negative log-likelihood (right) for gamma regression, averaged over 50,000 replications on synthetic data. The IIC strongly correlates with the negative log-likelihood (Spearman correlation across 256 uniformly sampled points is $r = 0.996$).}
\end{figure}

Our first set of numerical experiments consider the behavior of the IIC in the Bayesian linear regression scenario with zero-mean Gaussian prior. As outlined in the preceding examples, in this setting, the interpolator of interest is the Moore--Penrose pseudoinverse solution, and the IIC is given explicitly by (\ref{eq:IICLinearReg}). To generate attributes, consider a random Fourier feature (RFF) model \cite{rahimi2007rff} fitted to a reduced $\mathrm{MNIST}$ dataset comprising a random sample of 1000 hand-written images. This closely follows the now standard experimental setup from \cite{belkin2019reconciling}. Here, predictors adopt the form $f(x,\theta) = \sum_{j=1}^d \theta_j \cos(z_j \cdot x)$, with each random feature map $\cos(z_j \cdot x)$ characterized by sampling $z_j \iidsim \mathcal{N}(0,1)$ randomly \emph{a priori}, then fixed during both training and inference. 
To examine the behaviour of the IIC as the number of attributes varies, 3000 features are generated, of which between $1$ and $3000$ are included as attributes. Note that the model is able to achieve interpolation at $d=n=1000$. 
Figure~\ref{fig:DDRFF} (top), presented earlier in the introduction, depicts the mean-squared error on the standard \texttt{MNIST} test set of 10,000 images (targeting the class label as a one-hot vector), clearly demonstrating the expected double-descent phenomenon. 
To compute the IIC, the parameters $\theta^\star$ are computed via the exact solution (\ref{eq:ridgeless}) and inserted into (\ref{eq:IICLinearReg}). 
Similarly, in its domain of definition when $d>n$, the IIC exhibits monotone decreasing behavior, capturing  the right half of the double-descent curve (orange) in Figure~\ref{fig:DDRFF} (bottom). 
Values of the BIC for linear regression are plotted for $d<n$ (blue), giving the left-half of the curve, outside the critical region $750<d<1000$. 
To complete the picture, values of the BIC (denoted BIC$-\lambda$) for ridge regression with parameter $\lambda=0.1$ are plotted (green dots), showing monotone increasing behavior after its minimum value.

In order to consider the performance of the IIC in the setting of {\em generalized} linear models, we consider the case of gamma regression.
Let $r > 1$ be an arbitrary concentration parameter of the response variable (this is distinct from $\gamma$). By letting
\[
\ell(v, \lambda) = \lambda\left(v - \frac{r-1}{\lambda}\right) - (r-1)\left(\log v - \log \frac{r-1}{\lambda}\right),
\]
and since $\frac{v}{x} - 1 \geq \log(\frac{v}{x})$, letting $x = \frac{r-1}{\lambda}$, we find that $\ell \geq 0$ and $\ell = 0$ if and only if $v = \frac{r-1}{\lambda}$. Verifying that $\int_0^\infty e^{-\ell(v,\lambda)} \dd v = \frac{1}{\lambda (r-1)^{r-1}} \Gamma(r)$, the corresponding Gibbs likelihood for $\ell$ with $\gamma = 1$ is the Gamma density
\begin{equation}
\label{eq:GammaDistn}
p_{\gamma = 1}(v \vert \lambda) = \frac{\lambda^r}{\Gamma(r)} v^{r-1} e^{-\lambda v}.
\end{equation}
Note that by taking $\gamma \to 0^+$, $p_\gamma$ will concentrate around the mode of the Gamma distribution. In our framework, we assert that the loss $\ell$ is minimized when its two arguments are equal. This property can be achieved by a reparameterization: taking $v = \exp y$, and letting $\lambda = (r-1) \exp(-f(x,\theta))$, we write
\begin{equation}
\label{eq:GammaNLL}
\ell(f(x,\theta), y) = (r-1)(\exp(y-f(x,\theta)) - 1 - (y - f(x,\theta))).
\end{equation}
The concentration parameter is now absorbed by $\gamma$ in the Gibbs likelihood. Furthermore, noting that $e^x \sim 1 + x + \tfrac12 x^2$ as $x \to 0^+$, the behavior of $\ell$ in the neighborhood of the interpolator $f(x,\theta) = y$ is identical to the mean-squared error. If $f(x,\theta) = x \cdot \theta$ is linear, the IIC for this regression problem is equivalent to (\ref{eq:IICLinearReg}) for linear regression where $y_i = \log v_i$, the logarithm of the labels.

Although the IIC can be readily computed in this context, it is entirely unclear whether it should align with evaluations of the test loss. This is especially true given that the IIC for both gamma and linear regression is identical in view of Remark~\ref{rem:lossequal}, and yet exhibit two different forms of likelihood.

To assess the performance of the IIC as a predictor of generalization, we assess the correlation between the IIC and the negative log-likelihood (NLL) of the gamma regression model (\ref{eq:GammaNLL}). 
Synthetic data was generated as follows: for fixed integer $p$, let $\theta_0 \sim \mathcal{N}(0, I_p)$ denote the set of target coefficients. For $i=1,\dots,n$, we sample base inputs $x_i^0 \iidsim \mathcal{N}(0, \frac14 I_p)$ and corresponding outputs $y_i \sim \mathrm{Gamma}(r, re^{-x_i^0 \cdot \theta})$ where $\mathrm{Gamma}(r,\lambda)$ denotes the Gamma distribution with density (\ref{eq:GammaDistn}). To augment the number of features, we create $d$ RFF features from each $x_i^0$ to obtain each $x_i \in \mathbb{R}^d$. In Figure~\ref{fig:GammaReg}, we let $p = 10$, the number of samples $n$ range from 20 to 80, the feature dimension $d$ range from 100 to 300, and plot the IIC and log test negative log-likelihood averaged over 50,000 independent data generations. Despite the different choice of loss, the IIC still shows a strong correlation with model quality as measured by the negative log-likelihood.

\subsection{Polynomial Regression}
\label{sec:Poly}

\begin{table}
\bgroup
\def\arraystretch{1.25}
\begin{tabular}{ccccc}
\toprule
\textbf{Nodes} & \multicolumn{2}{c}{\textbf{Equally-spaced}} & \multicolumn{2}{c}{\textbf{Chebyshev}}\\
& \multicolumn{2}{c}{$x_i = 1 - \frac{2(i-1)}{n-1}$} & \multicolumn{2}{c}{$x_i = \cos(\pi \cdot \frac{2i-1}{2n})$} \\ \midrule
 $\boldsymbol{n}$ & \textbf{IIC} & $\boldsymbol{\log}$ \textbf{MSE} & \textbf{IIC} & $\boldsymbol{\log}$ \textbf{MSE} \\\midrule
5
& $-0.447$ ($0.018$)
& $-1.561$ ($0.020$)
& $-1.011$ ($0.018$)
& $-2.130$ ($0.016$) \\
7
& $1.354$ ($0.022$)
& $-0.067$ ($0.028$)
& $0.389$ ($0.022$)
& $-1.578$ ($0.022$) \\
9
& $3.130$ ($0.022$)
& $1.481$ ($0.033$)
& $1.984$ ($0.023$)
& $-1.044$ ($0.026$) \\
11
& $4.917$ ($0.022$)
& $3.065$ ($0.037$)
& $3.588$ ($0.024$)
& $-0.565$ ($0.029$) \\
13
& $6.627$ ($0.022$)
& $4.634$ ($0.041$)
& $5.150$ ($0.025$)
& $-0.136$ ($0.032$) \\
15
& $8.314$ ($0.022$)
& $6.195$ ($0.045$)
& $6.736$ ($0.025$)
& $0.326$ ($0.035$) \\

\bottomrule
\end{tabular}
\egroup
\vspace{.5cm}
\caption{\label{tab:Poly}
Performance metrics (IIC and log MSE) for polynomial approximation with degree $d = n + 1$ to Runge's function $f(x) = (1 + 25 x^2)^{-1}$ under equally-spaced and Chebyshev node selections in the presence of added Gaussian noise $\epsilon\iidsim\mathcal{N}(0,0.3^2)$ to both inputs and outputs. Means and standard deviations are reported over 1000 replications.}
\end{table}

\begin{figure}[h]
    \includegraphics[width=0.7\textwidth]{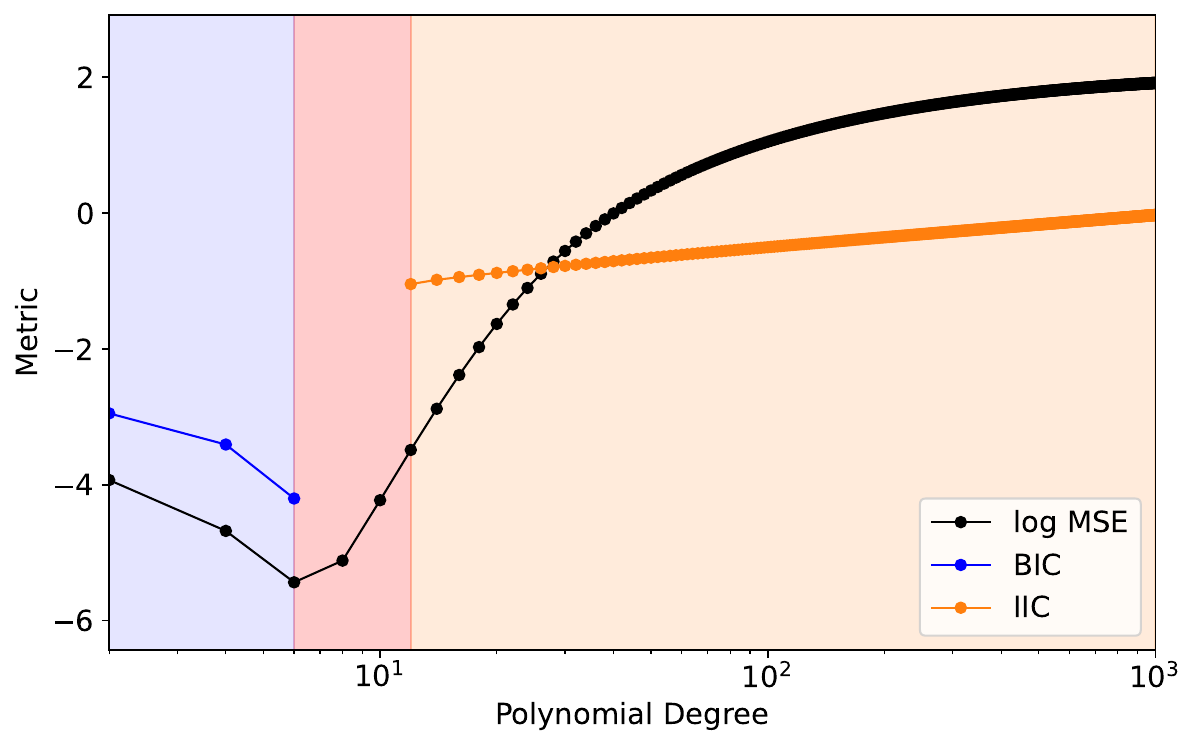}
\caption{\label{fig:PolyDD}Mean squared error (MSE; black) vs. classical BIC (\ref{eq:BICLinearReg}; blue), and our novel IIC (\ref{eq:IICLinearReg}; orange), for polynomial regression with increasing degree fitting to Runge's function $f(x) = (1 + 25 x^2)^{-1}$ over 10 equally spaced nodes on $[-1,1]$. \textbf{Polynomial regression does not exhibit double descent.}}
\end{figure}
Polynomial interpolators are among the most prominent examples to highlight that na\"{i}ve interpolation can result in poor quality fits. Hence, any measure of model quality in the overparameterized regime must be capable of predicting when such interpolators fail. For simplicity, we restrict ourselves to the scalar case: finding a function $f:\mathbb{R} \to \mathbb{R}$ that interpolates between $(x_1,y_1),\dots,(x_n,y_n)$ where each $x_i,y_i \in \mathbb{R}$ and $x_i \neq x_j$ for any $i \neq j$. As polynomial interpolators of this kind fall under the linear regression framework with input matrix $X = (\phi_j(x_i))_{i,j=1}^{n,d}$ derived from nodes $x_i$ by polynomial bases $\phi_j$, the IIC can easily be computed using (\ref{eq:IICLinearReg}). In perhaps the most famous case, where $\phi_j(x) = x^j$ are the monomials, $X$ is the \emph{Vandermonde matrix}.
Letting $\cond(X) = \frac{\sigma_{\max}(X)}{\sigma_{\min}(X)}$ denote the condition number of $X$, with $\sigma_{\max}$ and $\sigma_{\min}$ the largest and smallest singular values, respectively, since $\frac1n\log \sigma_{\max}(X)+\frac{n-1}{n}\log\sigma_{\min}(X) \leq \frac1{2n} \log \det(XX^\top) \leq \log \sigma_{\max}(X)$ and $\sup_{\|y\|=1}\|X^+ y\| = 1/\sigma_{\min}(X)$,
\begin{equation}
\label{eq:IICCond}
\frac{2}{n}\log \cond(X) - \log n \leq \sup_{\|y\| = 1} \IIC \leq 2 \log \cond(X) - \log n.
\end{equation}
While (\ref{eq:IICCond}) are coarse bounds in general, they provide a convenient interpretation for the behavior of the IIC in this case. Indeed, polynomial interpolators are commonly judged according to $\cond(X)$. The condition number is extensively well-studied for Vandermonde matrices when $n = d$ \cite{tyrtyshnikov1994bad}: if $x_1,\dots,x_n$ are real-valued nodes, then $X$ becomes notoriously ill-conditioned ($\cond(X) \geq 2^{n-2} n^{-1/2}$). From (\ref{eq:IICCond}), one should expect the IIC (and test error on noisy data) to become large whenever the monomial bases are used on an increasing number of nodes. Not all choices of nodes are equal, however; from \cite{gautschi1974norm}, if $x_i = 1 - \frac{2(i-1)}{n-1}$ are equally spaced on the interval $[-1,1]$, then $\cond(X) = \mathcal{O}(3.102^n)$, while the Chebyshev nodes $x_i = \cos(\pi \cdot \frac{2i-1}{2n})$ exhibit $\cond(X) = \mathcal{O}(2.415^n)$, indicating improved performance. Even in the case where $d=n+1$, this is realised for the IIC and log mean-squared error in Table \ref{tab:Poly}.

Polynomial regression also performs poorly with increasing model size. The Runge phenomenon \cite{epperson1987runge} demonstrates that increasing the degree can result in arbitrarily large mean-squared error. This is partially reflected in the condition number: if $x_i \in (-1,1)$, the squared spectral values of $X$ converge \emph{exponentially fast} as $d \to \infty$ to those of $H = ((1 - x_i x_j)^{-1})_{i,j=1}^n$. Even for moderate $n$, the condition number of $X$ does not improve substantially from the $n = d$ case. In Figure \ref{fig:PolyDD}, IIC correctly predicts that the double descent curve for random Fourier features (Figure \ref{fig:DDRFF}) is not expressed for polynomial interpolators.

\subsection{Diagonal Linear Neural Network}
\label{sec:DLNN}

\begin{figure}
\includegraphics[width=0.32\textwidth]{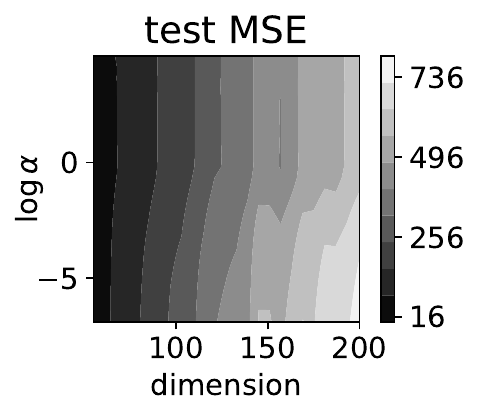}
\includegraphics[width=0.32\textwidth]{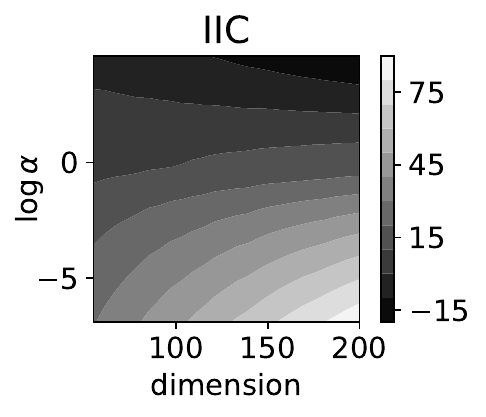}
\includegraphics[width=0.31\textwidth]{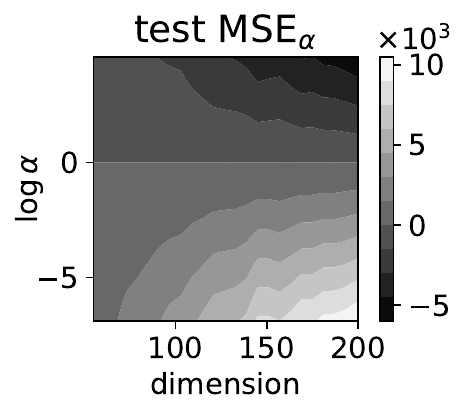}
\caption{\label{fig:DiagNN}IIC (left), MSE (center), and an adjusted MSE loss involving the initalisation parameter $\alpha$ (right) for diagonal linear neural networks. The IIC strongly correlates with the adjusted MSE$_\alpha$.}
\end{figure}

Our last example considers the diagonal linear neural network (DLNN) model of \cite{woodworth2020kernel}, which is noteworthy due to the fact that its implicit regularizer under gradient flow is both non-trivial, and has a closed form solution. In our framework, $\Theta = \mathbb{R}^{2d}$, $\mathcal{X} = \mathbb{R}^d$, $\mathcal{Y} = \mathbb{R}$, and
\begin{equation}
\label{eq:DiagLinNNModel}
f(x, \theta) = \sum_{j=1}^d (\theta_j^2 - \theta_{d+j}^2) x_j,\qquad \ell(y, y') = (y - y')^2.
\end{equation}
DLNNs are two-layer neural networks with quadratic activation function, where the final layer is comprised of $d$ copies of $\{-1,+1\}$. The model (\ref{eq:DiagLinNNModel}) exhibits the same hypothesis class as overparameterized linear regression with $d$ parameters, so $\mathcal{M}$ can be derived from (\ref{eq:LinSubspace}). But while gradient flow for overparameterized linear regression converges to the minimum norm solution (\ref{eq:ridgeless}), this is no longer true under the parameterization (\ref{eq:DiagLinNNModel}). In Lemma \ref{lem:ImpRegDiagNN}, we provide the precise expression of the regularizer under our framework. 
\begin{lemma}
\label{lem:ImpRegDiagNN}
Under Assumption \ref{ass:LocalReg}, let $d > \frac12 n$, and $\theta(t)$ satisfy gradient flow dynamics $\theta'(t) = -\nabla L(F(\theta(t)), y)$ with $\min_j |\theta_j(0)| > 0$. Then $\theta(t) \to \theta^\ast$ as $t\to\infty$, where $\theta^\ast$ satisfies (\ref{eq:INT}) with
\[
R(\theta)=\sum_{j=1}^{2d}\theta_{j}^{2}\left(2\log|\theta_{j}|-2\log|\theta_{j}(0)|-1\right)+ |\mathrm{sgn}(\theta_{j})-\mathrm{sgn}(\theta_{j}(0))|.
\]
Furthermore, the global minimizer $\argmin_{\theta \in \mathbb{R}^{2d}} R(\theta) = \theta(0)$.
\end{lemma}
The proof of Lemma \ref{lem:ImpRegDiagNN} is given in Appendix~\ref{sec:implicit}, following similar arguments to Theorem~1 in \cite{woodworth2020kernel}. For demonstration purposes, it is common to set $\theta_j(0) = \alpha$ for each $j$ and let $\alpha > 0$ vary. 
Using Lemma \ref{lem:ImpRegDiagNN}, we computed the IIC for the diagonal linear neural network model on a random subsample of 50 data points from the Energy Efficiency dataset \cite{tsanas2012accurate}, predicting only the heating load variable. Similarly, RFF is also used for feature generation, with the underlying feature dimension $d$ ranging between 55 to 200. Experiments were repeated 100 times at each configuration in order to account for the variation due to data subsampling and feature variation. Outcomes were averaged, and compared to the corresponding test MSE. The results are shown in Figure~\ref{fig:DiagNN}. Note that while the IIC does \emph{not} correlate with the MSE (the Spearman correlation coefficient between the IIC and MSE values was $r = 0.35$), it \emph{does} correlate closely with a rescaled version of the MSE. Noting that the log-prior $R(\theta)$ satisfies $R(\theta) \sim \log(\alpha^{-2})\|\theta\|_1$ as $\alpha \to 0^+$, we consider instead a rescaled version of the mean-squared error which exhibits similar behavior as $\alpha \to 0^+$, given by
\[
\text{MSE}_\alpha = \log(\alpha^{-2})\text{MSE}.
\]

After appropriately accounting for the behavior of the prior, the Spearman correlation coefficient increases significantly: between the IIC and $\text{MSE}_\alpha$ values, $r = 0.98$. This indicates that while the IIC can be a concrete estimate of model quality, nontrivial behavior in the regularization must first be carefully considered.

\section*{Acknowledgments}
LH is supported by the Australian Research Council through a Discovery Early Career Researcher Award (DE240100144).
FR was partially supported by the Australian Research Council through an Industrial Transformation Training Centre for Information Resilience (IC200100022). 
MWM would like to acknowledge the DOE, IARPA, NSF, and ONR for providing partial support of this work.

\clearpage
\appendix

\begin{center}
\LARGE \textbf{Appendix}
\end{center}

\section{Area and Coarea Formulae}\label{sec:gmt}

Duality in the marginal likelihood (Theorem \ref{thm:DR}) arises from the \emph{area} and \emph{coarea formulas} (particularly the latter), which decomposes a single integral into integrals over the level sets of a chosen function. These formulae are cornerstone results in geometric measure theory, with the general results for Lipschitz functions established by Federer \cite{federer1959curvature}. Our presentation of the result differs from most texts, but will be particularly convenient for our purposes. First, we present the area formula, as seen in \cite[Theorem 3.2.3]{federer1959curvature}.

\begin{theorem}[\textsc{Area Formula} \cite{federer1959curvature}]\label{thm:area}
Let $\Theta \subseteq \mathbb{R}^d$ be open, $f \in C_b(\mathbb{R}^n)$, and suppose that $F:\Theta \to \mathbb{R}^n$ is a real-valued locally Lipschitz function, \textbf{where} $\boldsymbol{d \leq n}$. For $\theta \in \Theta$, let $\boldsymbol{G(\theta) = DF(\theta)^\top DF(\theta)}$. Let $g$ be a measurable function such that $\theta \mapsto f(F(\theta))g(\theta) \det(G(\theta))^{-1/2} \in L^1(\Theta)$. Then
\[
\int_{\Theta} f(F(\theta)) g(\theta) \dd \theta = \int_{\mathbb{R}^n} f(z) \left(\sum_{\theta \in F^{-1}(z)} \frac{g(\theta)}{\det(G(\theta))^{1/2}} \right) \dd z.
\]
\end{theorem}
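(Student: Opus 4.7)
The plan is to derive this statement as a cosmetic rearrangement of the classical Federer area formula. By Rademacher's theorem, the locally Lipschitz map $F$ is differentiable almost everywhere on $\mathbb{R}^d$, so the $d$-dimensional Jacobian $J_F(\theta) = \det(G(\theta))^{1/2}$ is defined a.e. Setting $u(\theta) = f(F(\theta)) g(\theta)/J_F(\theta)$, which is integrable by the stated hypothesis on $g$, the standard area formula yields
\[
\int_{\mathbb{R}^d} u(\theta)\, J_F(\theta) \dd \theta \;=\; \int_{F(\mathbb{R}^d)} \sum_{\theta \in F^{-1}(z)} u(\theta) \dd \mathcal{H}^d(z).
\]
Since $f \circ F$ is constant on each fibre, $f(z)$ factors out of the inner sum, and the statement of the theorem follows once $\dd \mathcal{H}^d$ on the $d$-rectifiable image $F(\mathbb{R}^d) \subseteq \mathbb{R}^n$ is abbreviated as $\dd z$.

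The bulk of the work is therefore to establish the classical area formula, which I would do in three stages. First, I would show that the set $Z \subseteq \mathbb{R}^d$ where $DF$ fails to exist or is rank-deficient contributes nothing to either side: the left integrand vanishes there by the definition of $J_F$, and $\mathcal{H}^d(F(Z)) = 0$ follows from a Vitali covering estimate using the local Lipschitz constant of $F$ together with the observation that a $\delta$-neighbourhood of a rank-deficient linear map compresses $d$-dimensional volume arbitrarily. Second, on the complement $\mathbb{R}^d \setminus Z$, I would produce a countable Borel partition $\{E_k\}$ such that $F|_{E_k}$ is bi-Lipschitz onto its image, with the linearisation error of $DF$ on each $E_k$ bounded by a prescribed $\epsilon > 0$; this is done by a Lusin-type approximation to $DF$ together with singular-value estimates and a covering argument. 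Third, on each piece the ordinary Lipschitz change-of-variables formula applies directly, and summing in $k$ together with the observation that $\{k : z \in F(E_k)\}$ is in bijection with $F^{-1}(z)$ for $\mathcal{H}^d$-a.e. $z$ delivers the multiplicity $\sum_{\theta \in F^{-1}(z)}$ on the right-hand side.

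The hard part will be the quantitative Lipschitz decomposition in the second stage and, hand in hand, the covering argument controlling $\mathcal{H}^d(F(Z))$. Both rest on singular-value estimates for $DF$ and are precisely where the dimensional hypothesis $d \leq n$ enters essentially: were $d > n$, the fibres $F^{-1}(z)$ would generically be of positive dimension and the discrete summation on the right-hand side would be meaningless (one would instead be led to the coarea formula in Appendix A). Once the decomposition is in hand, the interchange of summation and integration is routine by Tonelli, using that the hypothesis on $g$ supplies absolute integrability of every term, and the cosmetic substitution $u \mapsto f(F)g/J_F$ described above recovers the exact form stated in the theorem.
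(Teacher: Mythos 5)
There is nothing in the paper for your argument to be measured against: Theorem~\ref{thm:area} is not proved there at all, but quoted as background from Federer (the appendix cites it as Theorem~3.2.3 of \cite{federer1959curvature}) solely so that it can be invoked later, e.g.\ in the lemma on integrating over a submanifold. What you have written is therefore a from-scratch sketch of the classical area formula itself, and it follows the standard geometric-measure-theory route (essentially the Evans--Gariepy/Federer proof): Rademacher differentiability, negligibility of the image of the non-differentiability and rank-deficient set via covering estimates, a Lusin-type decomposition into countably many pieces on which $F$ is bi-Lipschitz with controlled linearisation error, the elementary Lipschitz change of variables on each piece, and Tonelli to assemble the multiplicity function. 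Your reduction of the paper's statement to the classical formula is sound, and two of your side remarks are exactly the right ones: the right-hand side's ``$\dd z$'' must be read as $\mathcal{H}^d$ restricted to the image $F(\mathbb{R}^d)$ (for $d<n$ the image is Lebesgue-null in $\mathbb{R}^n$, so a literal Lebesgue integral would be vacuous), and the hypothesis $d\leq n$ is what makes $G=DF^\top DF$ the correct Jacobian and the fibres generically discrete, in contrast to the coarea formula used for the paper's duality result. The only caveats are ones of completeness rather than correctness: the two genuinely technical steps (the estimate $\mathcal{H}^d(F(Z))=0$ and the quantitative bi-Lipschitz decomposition) are only named, not carried out, and on the critical set the vanishing of the left-hand integrand is not ``by definition of $J_F$'' but is forced by the integrability hypothesis, since $\det(G)^{-1/2}=+\infty$ there. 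Given that the paper itself treats this as a citation-level classical fact, your sketch is an adequate and faithful account of how the cited proof actually goes.
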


Of particular note is the assumption that $d \leq n$ (bolded), so while the area formula will be useful in some of the results below, it is less critical for establishing duality as the coarea formula presented below, as seen in \cite[Theorem 3.2.12]{federer1959curvature}.

\begin{theorem}[\textsc{Coarea Formula} \cite{federer1959curvature}]\label{thm:coarea}
Let $\Theta \subseteq \mathbb{R}^d$ be open, $f \in C_b(\mathbb{R}^n)$, and suppose that $F:\Theta \to \mathbb{R}^n$ is a real-valued locally Lipschitz function, \textbf{where} $\boldsymbol{d > n}$. For $\theta \in \Theta$, let $\boldsymbol{J(\theta) = DF(\theta) DF(\theta)^\top}$. Let $g$ be a measurable function satisfying $\theta \mapsto f(F(\theta)) g(\theta) \det(J(\theta))^{-1/2} \in L^1(\Theta)$. Then
\[
\int_{\Theta} f(F(\theta)) g(\theta) \dd \theta = \int_{\mathbb{R}^n} f(z) \left(\int_{F^{-1}(z)} \frac{g(\theta)}{\det(J(\theta))^{1/2}} \dd \mathcal{H}^{d-n}(\theta) \right) \dd z.
\]
\end{theorem}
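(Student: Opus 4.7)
The plan is to establish the coarea formula via the classical scheme of reducing to affine maps using Rademacher's theorem, a Morse--Sard-type lemma for Lipschitz functions, and a quantitative linearization/covering argument. First I would perform standard measure-theoretic reductions: by linearity of both sides, monotone convergence, and approximation of nonnegative measurable functions by simple functions, it suffices to treat $f = \mathbf{1}_B$ for Borel $B \subseteq \mathbb{R}^n$ and $g = \mathbf{1}_E$ for bounded Borel $E \subset \mathbb{R}^d$ on which $F$ is globally Lipschitz (passing to a countable partition into such pieces if necessary). The $L^1$ hypothesis then justifies passing back to general $f,g$ via dominated convergence at the end.

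Second I would isolate the critical set $C = \{\theta : \det J(\theta) = 0\} \cup \{\theta : F \text{ not differentiable at }\theta\}$. Rademacher's theorem ensures the second piece has Lebesgue measure zero, and a Morse--Sard-type lemma for Lipschitz maps---proved by covering $C \cap E$ with small balls and estimating the image measure via the Lipschitz constant combined with the near-degeneracy of $DF$---shows that $F(C \cap E)$ has zero $n$-dimensional Lebesgue measure. Consequently the outer $\mathbb{R}^n$ integral on the right-hand side ignores $C$, and the integrability hypothesis forces the contribution of $C$ to the left-hand side to vanish as well. Thus attention reduces to the open set $\Omega = \mathbb{R}^d \setminus C$, on which $DF$ has full rank $n$ and $F^{-1}(z) \cap \Omega$ is, by the Implicit Function Theorem, a $(d-n)$-dimensional $\mathcal{C}^1$-submanifold with $\mathcal{H}^{d-n}$ locally equal to its Riemannian volume.

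The technical heart is a local linearization. Fix $\varepsilon > 0$. Using Lusin's theorem applied to $\theta \mapsto DF(\theta)$ together with the mean value inequality on small balls, I would construct a countable disjoint Borel partition $\{E_k\}$ of $\Omega \cap E$ such that on each $E_k$ the map $F$ is $\varepsilon$-close in the $\mathcal{C}^1$-sense to an affine map $A_k(\theta) = L_k \theta + b_k$, with $\det(L_k L_k^\top)^{1/2}$ uniformly $\varepsilon$-close to $\det J(\theta)^{1/2}$. For each affine piece the identity is exact: an SVD $L_k = U_k \Sigma_k V_k^\top$ decomposes $\mathbb{R}^d$ orthogonally as $\mathbb{R}^n \oplus \mathbb{R}^{d-n}$, and Fubini's theorem gives
\[
\int_{E_k} f(A_k(\theta)) \, \dd\theta = \int_{\mathbb{R}^n} f(z) \int_{A_k^{-1}(z) \cap E_k} \frac{\dd \mathcal{H}^{d-n}(\theta)}{\det(L_k L_k^\top)^{1/2}} \dd z,
\]
with the Jacobian factor $\det(L_k L_k^\top)^{1/2}$ absorbed by the change of variables on the $\mathbb{R}^n$ component. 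Summing over $k$ and passing $\varepsilon \to 0$ using the uniform $\mathcal{C}^1$-closeness---which controls both the approximation of the fiber $\mathcal{H}^{d-n}$-measure and the Jacobian factor simultaneously---yields the coarea formula on $\Omega$, and combining with the critical-set step finishes the proof.

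The main obstacle is precisely the simultaneous quantitative control in this final step: one must show that $C^1$-closeness of $F$ to $A_k$ on $E_k$ transfers to closeness of $\mathcal{H}^{d-n}(F^{-1}(z) \cap E_k)$ to $\mathcal{H}^{d-n}(A_k^{-1}(z) \cap E_k)$ \emph{uniformly in $z$}, while also matching the $\det J^{1/2}$ factors. This is the source of the technical machinery in Federer's original argument (Lemma~3.2.2 and its consequences in \cite{federer1959curvature}), and a careful inductive refinement of the partition $\{E_k\}$ guided by the modulus of continuity of $DF$ on compacta is what makes the error summable.
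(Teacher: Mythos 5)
The paper does not actually prove this statement: Theorem~\ref{thm:coarea} is quoted (in the paper's notation) from Federer \cite{federer1959curvature} and is used as an imported tool, so your attempt has to be judged as a reconstruction of the classical argument rather than against an in-paper proof. Your overall skeleton---reduction to indicators, restriction to the regular set, Lusin-type $\mathcal{C}^1$ linearization into pieces $E_k$, the exact affine computation via SVD and Fubini, and a limit in $\varepsilon$---is indeed the standard route.

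The genuine gap is your treatment of the critical set. The claimed ``Morse--Sard-type lemma for Lipschitz maps,'' asserting that $F(C\cap E)$ is $\mathcal{L}^n$-null, is false at this regularity: Whitney's classical example gives a $\mathcal{C}^1$ (hence locally Lipschitz) function $\mathbb{R}^2\to\mathbb{R}$ that is nonconstant along an arc of critical points, so the image of the critical set contains an interval. Morse--Sard requires $\mathcal{C}^{d-n+1}$ smoothness, and your proposed covering argument cannot rescue it: a ball of radius $r$ centered at a point where $DF$ is degenerate only has image in a slab of volume $o(r)\cdot(\mathrm{Lip}\,F\cdot r)^{n-1}$, while the number of balls needed is of order $r^{-d}$, so the total estimate $o(r)\,r^{n-1-d}$ diverges since $d>n$. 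The classical proof handles this point by entirely different means: (i) the Eilenberg inequality $\int^*\mathcal{H}^{d-n}\bigl(A\cap F^{-1}(z)\bigr)\,\dd z\le c\,(\mathrm{Lip}\,F)^n\,\mathcal{L}^d(A)$, which both yields measurability in $z$ of the fiber integrals (a point you never address) and shows that Lebesgue-null sets meet almost every fiber in an $\mathcal{H}^{d-n}$-null set; and (ii) a separate perturbation argument for the set where the coarea Jacobian vanishes (e.g.\ applying the area/coarea machinery to an auxiliary map such as $\theta\mapsto(F(\theta),\varepsilon P\theta)$ and letting $\varepsilon\to0^+$), not a statement about the measure of the image $F(C)$. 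Note also that in the weighted form stated here, the hypothesis $f(F(\cdot))\,g\,\det J^{-1/2}\in L^1(\mathbb{R}^d)$ only tells you that $f(F(\theta))g(\theta)=0$ for Lebesgue-a.e.\ $\theta\in C$; transferring that to $\mathcal{H}^{d-n}$-a.e.\ points of a.e.\ fiber again requires (i), so the critical set cannot be dismissed the way you propose. The remaining limiting step you flag as ``the main obstacle'' is indeed where Federer's quantitative lemmas live, but as written the critical-set step is the one that would actually fail.
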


Aside from its value in the proof of Theorem \ref{thm:DR}, the coarea formula is often useful for integrating under spherical coordinates.%
Consider $f \equiv 1$, $F(\theta) = \|\theta\|$, and $g \in L^1(\mathbb{R}^d)$. Then, for $S_d = \{x\,:\,\|x\|=1\}$ the $d$-dimensional sphere,
\[
\int_{\mathbb{R}^d} g(\theta) \dd \theta = \int_0^\infty \left(\int_{r S_d} g(\theta) \dd \mathcal{H}^{d-1}(\theta)\right) \dd r,
\]
and in particular,
\begin{equation}
\label{eq:CoareaRadial}
\int_{\mathbb{R}^d} g(\|\theta\|) \dd \theta = \frac{2\pi^{d/2}}{\Gamma(d/2)} \int_0^\infty r^{d-1} g(r) \dd r.
\end{equation}

\section{Integrating on a Submanifold}

The following lemma shows that integrating with respect to the Riemannian volume form is equivalent to integrating over the corresponding Hausdorff measure. 

\begin{lemma}\label{lem:}
Let $\mathcal{M}$ be an $m$-dimensional submanifold in $\mathbb{R}^d$, and let $f$ be a continuous real-valued function on $\mathcal{M}$. Then
\[
\int_{\mathcal{M}} f \dd V = \int_{\mathcal{M}} f(\theta) \dd \mathcal{H}^m(\theta).
\]
\end{lemma}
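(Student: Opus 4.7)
The plan is to reduce to a chart-by-chart computation via a smooth partition of unity, and then identify the local volume-form integrand $\sqrt{\det(D\phi^{\top}D\phi)}$ with the $\mathcal{H}^m$-density on each chart by invoking the area formula (Theorem~\ref{thm:area}). Since the Hausdorff measure $\mathcal{H}^m$ restricted to a smooth $m$-submanifold is $\sigma$-finite and Borel-regular, and the Riemannian volume form $\dd V$ is coordinate-invariant, the identification in a single chart propagates to a global equality by summing.

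Concretely, I would proceed as follows. First, cover $\mathcal{M}$ by countably many coordinate neighbourhoods $\{U_{\alpha}\}$ with smooth injective parameterizations $\phi_{\alpha}:V_{\alpha}\subset\mathbb{R}^{m}\to U_{\alpha}\subset\mathcal{M}\subset\mathbb{R}^{d}$, and choose a subordinate partition of unity $\{\psi_{\alpha}\}$ (which exists by \cite[Theorem 2.23]{lee2012smooth}). By definition of $\dd V$ in local coordinates,
\[
\int_{\mathcal{M}} f\,\psi_{\alpha}\,\dd V \;=\; \int_{V_{\alpha}} f(\phi_{\alpha}(u))\,\psi_{\alpha}(\phi_{\alpha}(u))\,\sqrt{\det(D\phi_{\alpha}(u)^{\top}D\phi_{\alpha}(u))}\,\dd u.
\]
Second, apply the area formula (Theorem~\ref{thm:area}) to $\phi_{\alpha}:\mathbb{R}^{m}\to\mathbb{R}^{d}$ with $m\le d$. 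Because $\phi_{\alpha}$ is a diffeomorphism onto its image, the preimage sum $\sum_{u\in \phi_{\alpha}^{-1}(\theta)}$ reduces to a single term for $\theta\in U_{\alpha}$ and vanishes elsewhere, giving
\[
\int_{V_{\alpha}} (f\psi_{\alpha})(\phi_{\alpha}(u))\,\sqrt{\det(D\phi_{\alpha}^{\top}D\phi_{\alpha})}\,\dd u \;=\; \int_{U_{\alpha}} f(\theta)\,\psi_{\alpha}(\theta)\,\dd\mathcal{H}^{m}(\theta).
\]
Third, sum over $\alpha$. Since $\sum_{\alpha}\psi_{\alpha}\equiv 1$ and the partition of unity is locally finite, monotone/dominated convergence yields
\[
\int_{\mathcal{M}} f\,\dd V \;=\; \sum_{\alpha}\int_{\mathcal{M}} f\,\psi_{\alpha}\,\dd V \;=\; \sum_{\alpha}\int_{\mathcal{M}} f\,\psi_{\alpha}\,\dd\mathcal{H}^{m} \;=\; \int_{\mathcal{M}} f\,\dd\mathcal{H}^{m}.
\]

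The only mildly subtle point is independence from the chart choice on overlaps $U_{\alpha}\cap U_{\beta}$ for the volume form side, but this is automatic from the standard change-of-variables argument underlying the definition of $\dd V$; for the Hausdorff side, coordinate-independence is automatic since $\mathcal{H}^{m}$ is defined intrinsically on the ambient $\mathbb{R}^{d}$. The main obstacle, if any, is handling non-compact $\mathcal{M}$ and ensuring integrability/measurability when passing sums through integrals, which is resolved by the local finiteness of the partition of unity together with $\sigma$-finiteness of $\mathcal{H}^{m}|_{\mathcal{M}}$. Continuity of $f$ together with the parameterization being Lipschitz on compact subsets of $V_{\alpha}$ is enough to apply Theorem~\ref{thm:area} in each chart, completing the proof.
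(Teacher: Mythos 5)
Your proposal is correct and follows essentially the same route as the paper's own proof: a partition of unity subordinate to coordinate charts, the local-coordinate expression of $\dd V$ via $\sqrt{\det(D\phi_\alpha^\top D\phi_\alpha)}$, a chart-by-chart application of the area formula (Theorem~\ref{thm:area}) to pass to $\mathcal{H}^m$, and summation using $\sum_\alpha \psi_\alpha \equiv 1$. The only difference is cosmetic—you note explicitly that injectivity collapses the preimage sum and you spell out the convergence bookkeeping—so no changes are needed.
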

\begin{proof}
Let $\{(\mathcal{U}_\alpha,\varphi_\alpha)\}$ be a coordinate chart for $\mathcal{M}$. By \cite[Theorem 2.23]{lee2012smooth}, there exists a smooth partition of unity $\{\psi_\alpha\}$ of $\mathcal{M}$ over the sets $\{\mathcal{U}_\alpha\}$. Letting $G_\alpha(\theta) = D\varphi^{-1}_\alpha(\theta)^\top D\varphi^{-1}_\alpha(\theta)$ denote the matrix representation of the Riemannian metric tensor on $\mathcal{U}_\alpha$, by \cite[Proposition 15.31]{lee2012smooth},
\[
\int_\mathcal{M} f  \dd V = \sum_\alpha \int_{\varphi_\alpha(\mathcal{U}_\alpha)} \psi_\alpha(\varphi_\alpha^{-1}(\theta)) f(\varphi_\alpha^{-1}(\theta)) \sqrt{\det G_\alpha(\theta)} \dd \theta.
\]
On the other hand, by the Area Formula,
\[
\int_{\varphi_\alpha(\mathcal{U}_\alpha)} \psi_\alpha(\varphi_\alpha^{-1}(\theta))f(\varphi_\alpha^{-1}(\theta)) \sqrt{\det G_\alpha(\theta)} \dd \theta = \int_{\mathcal{U}_\alpha} \left(\sum_{\theta \in \varphi_\alpha(z)} \psi_\alpha(\theta) f(\theta)\right) \dd \mathcal{H}^m(z),
\]
and since,
\[
\sum_\alpha \int_{\mathcal{U}_\alpha} \left(\sum_{\theta \in \varphi_\alpha(z)} \psi_\alpha(\theta) f(\theta)\right) d \mathcal{H}^m(z) = \sum_\alpha \int_{\varphi_\alpha(\mathcal{U}_\alpha)} \psi_\alpha(\theta) f(\theta) \dd \mathcal{H}^m(\theta),
\]
and $\sum_\alpha \psi_\alpha(\theta) = 1$, the result follows.
\end{proof}

Let $d > n$ and consider the submanifold $\mathcal{M} = F^{-1}(z)$ where $F:\mathbb{R}^d \to \mathbb{R}^n$ and $z \in \mathbb{R}^n$. For any $p \in \mathcal{M}$, the orthogonal projection matrix $\Pi(p) \in \mathbb{R}^{d\times d}$ mapping vectors from $\mathbb{R}^d$ into the tangent space $T_p\mathcal{M}$ is given by
\[
\Pi(p) = I - DF(p)^\top J(p)^{-1} DF(p).
\]
The shape operator $S_p:\mathbb{R}^d \times \mathbb{R}^d \to \mathbb{R}^d$, also called the \emph{Weingarten map}, is given by
\[
S_p(u,w) = \nabla_u \Pi(p)[w],\qquad p \in \mathcal{M},\quad u \in \text{range}(\Pi(p)), \quad w \in \text{range}(I - \Pi(p)),%
\]
where $\nabla_z$ is the directional derivative in the direction of $z \in \mathbb{R}^d$, and $\nabla_z \Pi = (\nabla_z \Pi_{ij})_{ij}$. Note that since $\Pi(p) \nabla_u \Pi(p) = \nabla_u \Pi(p) (I - \Pi(p))$ (which one can find by differentiating $\Pi(p) = \Pi(p)^2$), this definition matches that of \cite[Theorem 1]{absil2013extrinsic}.
For a function $\eta\,:\,\mathbb{R}^d \to \mathbb{R}$ that is twice-differentiable, the Hessian for $\eta$ on $\mathcal{M}$ satisfies \cite[eq. 10]{absil2013extrinsic}
\[
\mathrm{Hess}_p \eta [u] = \Pi(p)\nabla^2 \eta(p) u + S_p(u, (I - \Pi(p))\nabla \eta(p)),\qquad p \in \mathcal{M},\quad u \in \text{range}(\Pi(p)).
\]
Assuming that $DF(p)$ is full-rank, the compact singular value decomposition of $\Pi(p)$ is $U_p U_p^\top$, where $U_p \in \mathbb{R}^{d\times (d-n)}$ is the semi-orthogonal matrix mapping $\mathbb{R}^{d-n}$ into $T_p\mathcal{M}$.
Let $\mathrm{Exp}_p\,:\,T_p \mathcal{M} \to \mathcal{M}$ denote the exponential map around a neighbourhood at $p \in \mathcal{M}$. The change of coordinates $\phi_p:\mathbb{R}^{d-n} \to \mathcal{M}$ given by
\[
\phi_p(w) = \mathrm{Exp}_p(U_p w)
\]
is a diffeomorphism between a neighbourhood $U\subset \mathbb{R}^{d-n}$ of zero and $V \subset \mathcal{M}$ of $p$. Since \linebreak $\left.\frac{\dd}{\dd t}\text{Exp}_{p}(tv)\right|_{t=0}=v$ for any $v \in T_p \mathcal{M}$, $\left.\frac{\dd}{\dd t}\phi_p(tw)\right|_{t=0}=U_p w$ for any $w \in \mathbb{R}^{d-n}$, and so $D \phi_p(0) = U_p$. Letting $G(w) = D\phi_p(w)^\top D\phi_p(w)$, $G(0) = I$. Since $\mathrm{Hess}_p \eta$ is projected onto $T_p \mathcal{M}$, it has rank at most $d-n$. Therefore, we consider
\[
\nabla_{\mathcal{M}}^2 \eta(p) = U_p^\top (\mathrm{Hess}_p \eta) U_p,
\]
which can be full rank. Furthermore, by \cite[Proposition 5.45]{boumal2023introduction},
\[
\nabla_{\mathcal{M}}^2 \eta(p) = \nabla^2 (\eta \circ \phi_p)(0).
\]

\section{Fibration Theorems}
\label{sec:Fibration}
The last of the three tools we require is a \emph{fibration theorem} to establish regularity of functions of the form:
\begin{equation}
\label{eq:RhoForm}
\rho(z) = \int_{F^{-1}(z)} g(\theta) \dd \mathcal{H}^m(\theta). 
\end{equation}
The fundamental challenge with differentiating an integral of the form (\ref{eq:RhoForm}) is the differentiation under the integral sign over the sets $F^{-1}(z)$. In the sequel, we will refer to $F^{-1}(z)$ as the \emph{fiber} of $z$ under $F$. Typically, to differentiate under the integral sign, one would employ the Leibniz integral rule.
\begin{theorem}[\textsc{Leibniz Integral Rule} \cite{flanders1973differentiation}]
\label{thm:Leibniz}
Let $\Omega(t)$ be a family of manifolds parameterized by a smooth flow $\varphi$, that is, for some $t_0$, $\varphi(t,\Omega(t_0)) = \Omega(t)$. Then
\begin{align}\label{eq:lappo}
\frac{\partial}{\partial t} \int_{\Omega(t)} F(t,x) \dd V = \int_{\Omega(t)} \frac{\partial F}{\partial t}(t,x) + \nabla_x \cdot \left(F(t,x) \frac{\partial}{\partial t} \varphi(t, x)\right) \dd V.
\end{align}
\end{theorem}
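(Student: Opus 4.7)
The plan is the standard Reynolds-transport argument adapted to the manifold setting: pull the integral back to the fixed reference manifold $\Omega(t_0)$ via the flow $\varphi(t,\cdot)$, differentiate under the integral sign on this fixed domain, identify the derivative of the Jacobian with a divergence, and then push forward to $\Omega(t)$.

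First I would set $y \mapsto x = \varphi(t,y)$, so that $\varphi(t,\cdot)\colon \Omega(t_0) \to \Omega(t)$ is a diffeomorphism for each $t$ near $t_0$, and write
\[
\int_{\Omega(t)} F(t,x) \dd V(x) = \int_{\Omega(t_0)} F(t,\varphi(t,y))\, J(t,y) \dd V(y),
\]
where $J(t,y)$ is the Jacobian factor relating the pulled-back volume form $\varphi(t,\cdot)^*\dd V$ to $\dd V$ on $\Omega(t_0)$. Because the domain is now independent of $t$, one can differentiate under the integral sign in the classical sense, obtaining two contributions: a term from differentiating the integrand, giving $\partial_t F + \nabla_x F \cdot \partial_t \varphi$ evaluated at $\varphi(t,y)$, and a term from $\partial_t J$.

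The main step is to identify $\partial_t J / J$. By Cartan's magic formula applied to the volume form $\omega$, the Lie derivative along the velocity field $V = \partial_t \varphi \circ \varphi^{-1}$ satisfies $\mathcal{L}_V \omega = (\nabla_x \cdot V)\, \omega$ for a top form, and tracking this back under the pullback gives
\[
\frac{\partial J}{\partial t}(t,y) = J(t,y)\,(\nabla_x \cdot V)(\varphi(t,y)).
\]
Substituting and changing variables back to $x \in \Omega(t)$ yields
\[
\frac{\partial}{\partial t}\int_{\Omega(t)} F \dd V = \int_{\Omega(t)} \Bigl[\,\frac{\partial F}{\partial t} + \nabla_x F \cdot V + F\,\nabla_x \cdot V\,\Bigr] \dd V,
\]
and the bracketed quantity is exactly $\partial_t F + \nabla_x \cdot (F V) = \partial_t F + \nabla_x\cdot(F\, \partial_t \varphi)$ by the divergence product rule, giving \eqref{eq:lappo}.

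The main obstacle is the identification of $\partial_t J/J$ with $\nabla_x \cdot V$ on a submanifold rather than on an open Euclidean domain: one must be careful that the divergence on the right-hand side is the intrinsic divergence on $\Omega(t)$, and that $V = \partial_t\varphi \circ \varphi^{-1}$ is an admissible tangent field to the family. This is handled by working in local charts adapted to $\varphi$, expressing $J$ as a determinant of $D\varphi(t,\cdot)$ in those coordinates, and using Jacobi's formula $\partial_t \log \det A = \tr(A^{-1} \partial_t A)$ together with the chain rule to rewrite the trace as the intrinsic divergence of $V$; a partition-of-unity argument (as in the proof of Proposition~\ref{prop:LaplaceManifold}) glues the local identities into the global formula, and smoothness of $\varphi$ and compact support (or appropriate decay) of $F$ justifies the interchange of derivative and integral throughout.
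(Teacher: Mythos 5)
The paper does not prove this statement at all: Theorem~\ref{thm:Leibniz} is imported verbatim from Flanders \cite{flanders1973differentiation} as a known tool, so there is no in-paper argument to compare against. Your sketch is essentially the classical proof of that result and is sound: pull the integral back along $\varphi(t,\cdot)$ to the fixed reference manifold $\Omega(t_0)$, differentiate under the integral sign, and identify $\partial_t J/J$ with the divergence of the velocity field $V=\partial_t\varphi\circ\varphi(t,\cdot)^{-1}$ via Jacobi's formula, which is the determinant-level expression of the Lie-derivative computation $\mathcal{L}_V\,\dd V=(\nabla\cdot V)\,\dd V$. Flanders argues at the level of differential forms (pullback plus Cartan's magic formula applied to a time-dependent form), whereas you work with scalar integrands against the induced volume form and Jacobian determinants; the two are the same mechanism in different clothing, and your version is arguably more elementary for the paper's use case, where only smoothness of $z\mapsto\pi^\star(z)$ is extracted. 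Two caveats you already half-flag and should keep explicit if you write this out: when $\dim\Omega(t)<d$ the divergence appearing in \eqref{eq:lappo} must be read as the intrinsic (tangential) divergence on $\Omega(t)$ — equivalently, the whole right-hand side is the Lie derivative of the weighted volume form — since the first-variation identity $\partial_t J=J\,\mathrm{div}_{\Omega(t)}V$ involves the tangential divergence, not the ambient one; and since the fibers $F^{-1}(z)$ in the paper need not be compact, the interchange of $\partial_t$ with the integral requires an integrability/domination hypothesis on $F$ and its derivatives rather than compact support, which is exactly the role played by Assumption~\ref{ass:Base} when the rule is applied to $\pi^\star$.
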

We remark that this allows us to take as many derivatives as $F$ and $\phi$ allow, in particular, if both $F$ and $\phi$ are $\mathcal{C^\infty}$-smooth, then so is \eqref{eq:lappo}.

Unfortunately, the conditions for the Leibniz integral rule (the existence of $\varphi$) are not so easily verified for (\ref{eq:RhoForm}), as it is not clear that there exists a smooth flow $\varphi(z,x)$ parameterizing $F^{-1}(z)$ with respect to a reference $F^{-1}(z_0)$, that is, $\varphi(z,F^{-1}(z_0)) = F^{-1}(z)$ for $z,z_0 \in \mathbb{R}^n$. Fortunately, the existence of such a flow $\varphi$ is the subject of significant prior work into \emph{fiber bundles}. 

\begin{definition}
A map $F:\mathbb{R}^d \to \mathbb{R}^n$ induces a \emph{smooth fiber bundle} if for each $z_0 \in F(\mathbb{R}^d) \subseteq \mathbb{R}^n$, there is a neighbourhood $N_0 \subseteq \mathbb{R}^n$ of $z_0$ and a smooth map $\varphi:N_0 \times \mathbb{R}^d$ such that
\[
\varphi(z, F^{-1}(z_0)) = F^{-1}(z),\qquad\text{for all }z \in N_0.
\]
\end{definition}

If $F$ induces a smooth fiber bundle, then the conditions of the Reynolds transport theorem are satisfied, and hence, we can differentiate $\rho$ as many times as we like. Now, assume for the moment that $F$ induces a smooth fiber bundle. Then for any $y \in F^{-1}(z_0)$,
\[
F(\varphi(z,y)) = z\quad\mbox{and so}\quad DF(\varphi(z,y))D\varphi(z,y) = I.
\]
Therefore, $D\varphi(z,y)$ must be a right inverse of $DF(\varphi(z,y))$. For this to occur, we will require that $F$ be a \emph{submersion}.

\begin{definition}
A differentiable map $F:\mathbb{R}^d \to \mathbb{R}^n$ is a \emph{submersion} if the Jacobian matrices $DF(\theta)$ have right inverses for all $\theta \in \mathbb{R}^d$. 
\end{definition}

The next theorems, the first by Meigniez \cite{meigniez2002submersions}, and the second presented by Ehresmann \cite{ehresmann1950connexions}, provide powerful sufficient conditions for a map to be a smooth fiber bundle. These conditions provide the basis for the Meigniez and Ehresmann conditions (resp.) stated in the main body. 

\begin{theorem}[\textsc{Meigniez Fibration Theorem} \cite{meigniez2002submersions}]
\label{thm:Meigniez}
Let $F:\mathbb{R}^d \to \mathbb{R}^n$ be a smooth submersion and assume $d > n$. If $F^{-1}(z)$ is diffeomorphic to $\mathbb{R}^{d-n}$ for each $z \in F(\mathbb{R}^d)$, then $F$ induces a smooth fiber bundle. 
\end{theorem}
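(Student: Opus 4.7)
The plan is to construct, for each $z_0 \in F(\mathbb{R}^d)$, a local trivialization: a neighbourhood $N_0$ of $z_0$ and a smooth map $\varphi:N_0 \times \mathbb{R}^d \to \mathbb{R}^d$ with $\varphi(z, F^{-1}(z_0)) = F^{-1}(z)$ for all $z \in N_0$. Since $F$ is a submersion, the submersion theorem provides local product structure around each individual $\theta \in F^{-1}(z_0)$: in suitable coordinates, $F$ looks like a projection $\mathbb{R}^d \to \mathbb{R}^n$. The substantive work is in patching these pointwise trivializations into a single one over the entire (non-compact) fiber $F^{-1}(z_0)$.

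The standard mechanism for this patching is via horizontal lifts. One first equips $\mathbb{R}^d$ with a smooth distribution $H \subset T\mathbb{R}^d$ complementary to $\ker DF$; since $DF(\theta)$ has a right inverse at every $\theta$, such a distribution exists (for instance, $H_\theta = (\ker DF(\theta))^\perp$ with respect to a fixed smooth Riemannian metric on $\mathbb{R}^d$). For each coordinate vector field $e_i$ on $\mathbb{R}^n$, let $X_i$ denote the unique horizontal vector field with $DF(X_i) = e_i$. If the combined flow of $X_1, \dots, X_n$ exists for all $(z-z_0)$ in some neighbourhood $N_0$, starting from every $\theta_0 \in F^{-1}(z_0)$, then defining $\varphi(z,\theta_0)$ to be the endpoint of that flow at time $z - z_0$ yields the required local trivialization; smoothness comes from smooth dependence on initial conditions. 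In the Ehresmann setting (compact preimages), completeness of these horizontal flows is immediate, which is why Ehresmann's theorem is comparatively easy.

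The main obstacle is precisely the completeness issue: without compactness, horizontal integral curves can in principle escape to infinity within a fiber in finite time, obstructing a uniform lifting over all of $F^{-1}(z_0)$ simultaneously. Meigniez's insight is that the hypothesis $F^{-1}(z) \cong \mathbb{R}^{d-n}$ for every $z$ supplies enough global control on each fiber to circumvent this. Concretely, one selects a smoothly varying family of diffeomorphisms $\mathbb{R}^{d-n} \to F^{-1}(z)$ (or equivalently, a smoothly varying exhaustion of each fiber by compact handlebodies), and then modifies the horizontal distribution by a smooth positive factor so that lifts of a fixed compact path in $N_0$ remain inside a prescribed compact region of the corresponding fiber. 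Contractibility and trivial topology of $\mathbb{R}^{d-n}$ are what allow these exhaustions to be assembled coherently across the parameter $z$ without topological obstructions.

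The hardest part of the argument is this simultaneous bookkeeping, balancing the non-compactness of individual fibers against the need for smooth dependence on the base point $z$; we would invoke Meigniez's original construction \cite{meigniez2002submersions} for the technical details, since the present paper only uses the theorem as a black box to justify the existence of the flow $\varphi$ needed in Proposition \ref{thm:regularity} and Lemma \ref{lem:CompactApprox1}.
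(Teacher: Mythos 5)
The paper offers no proof of this statement: it is quoted as an external supporting result from Meigniez \cite{meigniez2002submersions}, which is exactly where your proposal also lands, since you defer the essential completeness/patching argument to the original construction. Your preliminary horizontal-lift sketch correctly identifies why the compact (Ehresmann) case is straightforward and where the difficulty lies for non-compact fibers, but---like the paper---you are ultimately using Meigniez's theorem as a black box rather than proving it, so there is no substantive divergence to report.
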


\begin{theorem}[\textsc{Ehresmann Fibration Theorem} \cite{ehresmann1950connexions}]
\label{thm:Ehresmann}
Let $F:M \to N$ be a smooth surjective submersion. If $F^{-1}(K)$ is compact for every compact set $K \subseteq M$, then $F$ induces a smooth fiber bundle. 
\end{theorem}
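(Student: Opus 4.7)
The plan is to construct local trivializations of $F$ by \emph{horizontally lifting} coordinate vector fields from the base $N$ and using their flows to sweep out a neighbourhood of each fiber; the properness hypothesis enters precisely to guarantee that these flows are defined on the relevant parameter domain.

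First I would endow $M$ with a smooth Riemannian metric (via a partition of unity) and set $H_p = (\ker DF_p)^\perp \subset T_p M$. Since $F$ is a submersion, $DF_p$ restricted to $H_p$ is a linear isomorphism onto $T_{F(p)} N$, so any smooth vector field $X$ on $N$ admits a unique smooth \emph{horizontal lift} $\tilde X$ on $M$ with $\tilde X_p \in H_p$ and $DF(\tilde X) = X \circ F$. Fix $z_0 \in N$, let $\Omega = F^{-1}(z_0)$ (an embedded submanifold by the regular value theorem, since $F$ is a submersion), and choose a chart $\psi : N_0 \to B \subset \mathbb{R}^n$ around $z_0$ with $\psi(z_0) = 0$ and $B$ an open ball, where $n = \dim N$. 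Let $\tilde\partial_1,\ldots,\tilde\partial_n$ be the horizontal lifts of the coordinate fields $\partial_1,\ldots,\partial_n$ on $N_0$, with flows $\Phi^i_t$.

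Second, I would define the candidate trivialization
\[
\varphi : B \times \Omega \longrightarrow F^{-1}(N_0), \qquad \varphi(z,x) \;=\; \bigl(\Phi^n_{z_n} \circ \cdots \circ \Phi^1_{z_1}\bigr)(x).
\]
Because horizontal lifts project onto their base vector fields, the $F$-image of the integral curve of $\tilde\partial_i$ through $p$ is the straight coordinate line $\psi(F(p)) + t e_i$, so $F(\varphi(z,x)) = \psi^{-1}(z)$ and in particular $\varphi(z,\Omega) = F^{-1}(\psi^{-1}(z))$ as required by the fiber bundle definition. Smoothness of $\varphi$ follows from smooth dependence of flows on initial data, and a smooth inverse is obtained by reversing the flows in the opposite order. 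This is exactly the local trivialization required; since $z_0$ was arbitrary, $F$ is a smooth fiber bundle.

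The main obstacle, and the only place where the properness hypothesis is used, is establishing completeness of the horizontal flows: one must know that each $\Phi^i_{z_i}$ is actually defined on $F^{-1}(B')$ for a suitable shrunken ball $B' \subset B$ and $z_i$ ranging over the corresponding coordinate interval. Shrink $B$ to an open ball $B'$ whose closure $\overline{B'}$ is compact and contained in $B$. For any $p \in F^{-1}(B')$, the integral curve $t \mapsto \Phi^i_t(p)$ projects under $F$ to the line $\psi(F(p)) + t e_i$, which stays in $\overline{B'}$ for $t$ in some fixed closed interval $J$ determined by the radius of $B'$. By hypothesis $F^{-1}(\overline{B'})$ is compact, so the integral curve is trapped in this compact set for $t \in J$, ruling out finite-time blow-up by the standard ODE escape lemma. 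Hence each flow extends to all required times simultaneously on $F^{-1}(B')$, the composition defining $\varphi$ is unambiguous, and the argument goes through. Without properness the classical counterexamples (for instance, the projection of an open strip in $\mathbb{R}^2$ onto $\mathbb{R}$) show that horizontal flows can escape in finite time and the local trivialization fails to exist.
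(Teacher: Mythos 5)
The paper does not prove this statement at all: it is quoted as a classical result with a citation to Ehresmann (1950), so there is no internal proof to compare against. Your argument is the standard proof of the Ehresmann fibration theorem --- choose a Riemannian metric, take the horizontal distribution $(\ker DF)^\perp$, lift the coordinate vector fields of a chart, and use their flows to build the local trivialization, with properness invoked exactly where it is needed, namely to trap the integral curves in the compact set $F^{-1}(\overline{B'})$ and rule out finite-time escape --- and it is essentially correct. Two small points of care: first, the interval of admissible flow times is not a single $J$ uniform in $p$ as you phrase it; what actually matters is that the staircase path $0 \to (z_1,0,\dots,0) \to \dots \to z$ stays inside $\overline{B'}$ whenever $z \in B'$ (each intermediate point has norm at most $\|z\|$), so at every stage of the composition the base projection remains in $\overline{B'}$ and the escape-lemma argument applies to that stage. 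Second, the lifted fields $\tilde\partial_i$ are only defined on the open set $F^{-1}(N_0)$, so the escape lemma should be run in that open submanifold; this is fine precisely because $\overline{B'}$ is a compact subset of $B$, so $F^{-1}(\psi^{-1}(\overline{B'}))$ is compact and contained in $F^{-1}(N_0)$. Note also that the statement as printed in the paper has a typo ($K \subseteq M$ should read $K \subseteq N$); you correctly interpreted the hypothesis as properness of $F$, i.e.\ compactness of preimages of compact subsets of the codomain, which is what the proof (and the paper's use of the theorem) requires.
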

Note that the conditions of the Ehresmann Fibration Theorem are satisfied if $F:\mathbb{R}^d \to \mathbb{R}^n$ is a smooth submersion such that, for some constants $m,p,r > 0$,
\[
\|F(x)\| \geq m \|x\|^p \text{ for all } \|x\| > r.
\]
Indeed, this condition implies that for any $x \in F^{-1}(z)$,
$\|x\| \leq \min\{r, m^{-1/p} \|z\|^{1/p}\}$, 
and hence $F^{-1}(K)$ is compact for any compact set $K$.

\section{Laplace Approximation}
\label{sec:Laplace}

The final essential tool we require is an asymptotic expansion for integrals in the low temperature regime. The following can be found in \cite[Theorem 15.2.5]{simon2015advanced}.

\begin{lemma}[\textsc{Laplace Approximation} \cite{simon2015advanced}]
\label{lem:Laplace}

Let $\Theta \subseteq \mathbb{R}^d$ be open, and $\eta,g$ be real-valued functions with $g$ non-negative and $\eta$ attaining a unique global minimum at $x_0$. Assume that $\eta$ and $g$ are $\mathcal{C}^{\infty}$-smooth in a neighbourhood of $x_0$, $x_0 \in \Theta$, $g(x_0) \neq 0$, and $\nabla^2 \eta(x_0)$ is non-singular. Then as $\gamma \to 0^+$,
\[
\int_{\Theta} e^{-\frac1\gamma \eta(x)} g(x) \dd x = (2\pi\gamma)^{n/2} \det(\nabla^2 \eta(x_0))^{-1/2} e^{-\frac1\gamma \eta(x_0)} g(x_0)[1 + \mathcal{O}(\gamma)],
\]
provided the left-hand side is integrable for some $\gamma > 0$. 
\end{lemma}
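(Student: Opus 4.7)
\textbf{Proof proposal for Theorem \ref{thm:IIC}.} The plan is to pass to the dual representation, concentrate first in $\gamma$ and then in $\tau$, and apply two Laplace approximations (one in the codomain $\mathbb{R}^{mn}$, one on the submanifold $\mathcal{M}$ and on $\Theta$). I break the argument into four steps.

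\textbf{Step 1 (reduction to a compact dual integral).} By Proposition \ref{thm:DR}, $\mathcal{Z}_{n,\gamma,\tau}=\int_{\mathbb{R}^{mn}} p_{n,\gamma}^\star(y\vert z)\,\pi^\star_\tau(z)\,\dd z$. Lemma \ref{lem:PriorProps} gives a compact $K\subset\mathbb{R}^d$ with $K\cap\mathcal{M}\neq\emptyset$ and $\theta_0\in K$ so that $\mathcal{Z}_{n,\gamma,\tau}=\mathcal{Z}^K_{n,\gamma,\tau}+o(\tau^k)$ uniformly as $\gamma\to 0^+$. By Lemma \ref{lem:CompactApprox1}, the compactly truncated dual prior $\pi^\star_{K,\tau}$ is $\mathcal{C}^{\infty}$-smooth in a neighbourhood of $z=y$, which lets me apply Laplace's method in the next step.

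\textbf{Step 2 (cold posterior limit $\gamma\to 0^+$).} The integrand of $\mathcal{Z}^K_{n,\gamma,\tau}$ is $c_{n,\gamma}(z)\,e^{-L(y,z)/\gamma}\,\pi^\star_{K,\tau}(z)$. Because $L(y,z)\ge 0$ attains its unique minimum at $z=y$ with $L(y,y)=0$ and non-singular Hessian $\nabla_z^2 \ell(y_i,y_i)$ (Assumption \ref{ass:UniquePrior}(a)), Lemma \ref{lem:Laplace} applied to both numerator and the normalizing identity $\int e^{-L(y,z)/\gamma}\,\dd z\cdot c_{n,\gamma}(y)=1+\mathcal{O}(\gamma)$ yields
\[
\int_{\mathbb{R}^{mn}} p_{n,\gamma}^\star(y\vert z)\,g(z)\,\dd z = g(y)\bigl[1+\mathcal{O}(\gamma)\bigr]
\]
for any $g$ smooth at $y$; take $g=\pi^\star_{K,\tau}$. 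The $c_{n,\gamma}$ factors cancel because the Gaussian prefactors match to leading order. This gives $\mathcal{Z}_{n,\gamma,\tau}=\pi^\star_{K,\tau}(y)[1+\mathcal{O}(\gamma)]+o(\tau^k)$.

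\textbf{Step 3 (two Laplace approximations in $\tau$).} Since $F^{-1}(y)\cap K=\mathcal{M}\cap K$ and the tails outside $K$ are absorbed into $o(\tau^k)$, I write $\pi_\tau(\theta)=\pi(\theta)^{1/\tau}/Z_\tau$ with $Z_\tau=\int_{\mathbb{R}^d}\pi(\theta)^{1/\tau}\,\dd\theta=\int_{\mathbb{R}^d}e^{-R(\theta)/\tau}\,\dd\theta$, so
\[
\pi^\star_{K,\tau}(y) \;=\; \frac{\int_{\mathcal{M}} e^{-R(\theta)/\tau}\det J(\theta)^{-1/2}\,\dd\mathcal{H}^{d-mn}(\theta)}{\int_{\mathbb{R}^d} e^{-R(\theta)/\tau}\,\dd\theta}.
\]
The numerator fits Proposition \ref{prop:LaplaceManifold} with $\eta=R$ and $Q(\theta)=\det J(\theta)^{-1/2}$: Assumption \ref{ass:UniquePrior} gives a unique minimiser $\theta^\star$ on $\mathcal{M}$ with non-singular manifold Hessian, so the numerator equals
\[
(2\pi\tau)^{(d-mn)/2}\, e^{-R(\theta^\star)/\tau}\,\det J(\theta^\star)^{-1/2}\,\det(\nabla_{\mathcal{M}}^2 R(\theta^\star))^{-1/2}[1+\mathcal{O}(\tau)].
\]
The denominator is handled by ordinary Laplace (Lemma \ref{lem:Laplace}) at the unique global maximiser $\theta_0$ of $\pi$, giving $(2\pi\tau)^{d/2}e^{-R(\theta_0)/\tau}\det(\nabla^2 R(\theta_0))^{-1/2}[1+\mathcal{O}(\tau)]$.

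\textbf{Step 4 (assemble and take $-\log$).} Dividing numerator by denominator, the factors of $(2\pi\tau)^{d/2}$ partially cancel leaving $(2\pi\tau)^{-mn/2}$, and the determinants combine into $\mathcal{K}_{\mathcal{M}}^\pi(\theta^\star,\theta_0)^{-1/2}\det J(\theta^\star)^{-1/2}$. Taking $-\log$ of $\pi^\star_{K,\tau}(y)$ and adding the $\mathcal{O}(\gamma)$ from Step 2 (the $o(\tau^k)$ remainder is absorbed into $\mathcal{O}(\tau)$) gives precisely the claimed expansion.

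The main obstacle I anticipate is bookkeeping in Step 2: I must be careful that the normalising constants $c_{n,\gamma}(z)$ in $p^\star_{n,\gamma}$ combine with the Gaussian factors from Lemma \ref{lem:Laplace} to produce a clean $[1+\mathcal{O}(\gamma)]$ with no spurious determinant terms, which relies on expanding $c_{n,\gamma}(z)$ around $z=y$ and using that $\nabla_z^2\ell(y_i,y_i)$ appears symmetrically in both the Gaussian integral estimate and the Laplace approximation of the numerator. Everything else is a direct application of results already established in the excerpt.
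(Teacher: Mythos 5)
Your proposal does not address the statement at hand. The statement to be proved is Lemma~\ref{lem:Laplace}, the classical Laplace approximation for integrals of the form $\int_{\mathbb{R}^n} e^{-\eta(x)/\gamma} g(x)\,\dd x$ as $\gamma \to 0^+$ (which the paper itself does not reprove, but imports from \cite[Theorem 15.2.5]{simon2015advanced}). What you have written instead is a proof sketch of Theorem~\ref{thm:IIC}, the main free-energy expansion. Worse, as an argument for Lemma~\ref{lem:Laplace} it is circular: in Steps~2 and~3 you explicitly invoke Lemma~\ref{lem:Laplace} (for the cold-posterior limit in $z$ and for the denominator integral over $\mathbb{R}^d$), so nothing in your write-up establishes the asymptotic formula that the lemma asserts.

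A proof of the actual statement has to be an elementary analytic argument, independent of the paper's Bayesian machinery. The standard route: shift so that $\eta(x_0)$ factors out; split $\mathbb{R}^n$ into a small ball $B_\delta(x_0)$ and its complement; on the complement use the unique global minimum together with integrability of the left-hand side for some fixed $\gamma_0>0$ to show that the contribution is $O\bigl(e^{-c/\gamma}\bigr)$ for some $c>0$, hence negligible; on $B_\delta(x_0)$ Taylor-expand $\eta(x)=\eta(x_0)+\tfrac12 (x-x_0)^\top \nabla^2\eta(x_0)(x-x_0)+O(\|x-x_0\|^3)$ and $g(x)=g(x_0)+O(\|x-x_0\|)$, change variables $x=x_0+\sqrt{\gamma}\,u$, and evaluate the resulting Gaussian integral to obtain the prefactor $(2\pi\gamma)^{n/2}\det(\nabla^2\eta(x_0))^{-1/2} g(x_0)$, with the cubic and linear error terms contributing the $[1+\mathcal{O}(\gamma)]$ factor (using that odd moments of the Gaussian vanish to get the $\mathcal{O}(\gamma)$ rather than $\mathcal{O}(\sqrt{\gamma})$). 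None of these steps appears in your proposal, so as a proof of Lemma~\ref{lem:Laplace} it is missing in its entirety. (As an aside, your sketch of Theorem~\ref{thm:IIC} does track the paper's actual proof of that theorem quite closely, but that is not the task you were set here.)
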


The following lemma demonstrates that our conditions for global regularity (Meigniez and Ehresmann) imply local regularity (Assumption \ref{ass:LocalReg}). 
\begin{lemma}
\label{lem:Condition}
Under Assumptions \ref{ass:Base} and \ref{ass:UniquePrior}, the Meigniez and Ehresmann conditions are sufficient for Assumption \ref{ass:LocalReg}.
\end{lemma}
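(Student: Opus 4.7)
The plan is to observe that the smoothness and full-rank portions of Assumption~\ref{ass:LocalReg} are immediate consequences of Condition~\ref{ass:GlobalReg}: the latter already asserts $\mathcal{C}^\infty$-smoothness of $F$ and $\pi$ on an open set containing $\Theta$ and full rank of $DF$ on all of $\Theta$, which is stronger than the required statement on $\mathcal{M}$. Thus the only nontrivial work is showing $\limsup_{\gamma \to 0^+} \mathcal{Z}_{n,\gamma} < +\infty$.

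For this, I would route everything through the dual representation. Since Assumption~\ref{ass:Base} holds and $d > mn$ (implicit in the overparameterized setting targeted by Condition~\ref{ass:GlobalReg}), Proposition~\ref{thm:DR} rewrites
\[
\mathcal{Z}_{n,\gamma} = \int_{\mathbb{R}^{mn}} p_{n,\gamma}^\star(y \vert z)\, \pi^\star(z)\,\dd z,
\]
where $\pi^\star$ is a bona fide probability density on $\mathbb{R}^{mn}$. Because either the Meigniez or Ehresmann condition is assumed, Proposition~\ref{thm:regularity} guarantees that $\pi^\star$ is $\mathcal{C}^\infty$-smooth on $F(\Theta)$. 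The point $y$ lies in $F(\Theta)$ because $\mathcal{M}$ is nonempty (Assumption~\ref{ass:Base}) and any $\theta \in \mathcal{M}$ satisfies $F(\theta)=y$; thus $\pi^\star$ is smooth in a neighborhood of $y$, and in particular $\pi^\star(y)$ is finite.

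The remaining step is to apply Laplace's method to the dual integral in the temperature variable $\gamma$. The function $L(\cdot,y)$ has a unique global minimum at $z=y$ (since $\ell(z_i,y_i)=0$ iff $z_i=y_i$), and Assumption~\ref{ass:UniquePrior}(a) ensures the Hessian $\nabla_z^2 \ell(z,y)$ is nonsingular at $z=y$. Exactly as in the cold-posterior computation inside the proof of Theorem~\ref{thm:IIC}, the ratio $p_{n,\gamma}^\star(y\vert z) = c_{n,\gamma}(z) e^{-L(z,y)/\gamma}$ acts on smooth test functions as a sharpening Gaussian kernel in $z$ centered at $y$, yielding
\[
\int_{\mathbb{R}^{mn}} p_{n,\gamma}^\star(y\vert z)\, g(z)\, \dd z = g(y)[1 + \mathcal{O}(\gamma)]
\]
for any $g$ smooth at $y$. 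Taking $g = \pi^\star$ gives $\mathcal{Z}_{n,\gamma} = \pi^\star(y)[1+\mathcal{O}(\gamma)]$, and hence $\limsup_{\gamma\to 0^+} \mathcal{Z}_{n,\gamma} = \pi^\star(y) < \infty$, completing the verification of Assumption~\ref{ass:LocalReg}.

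The main obstacle is not any single calculation but the bookkeeping that ensures the hypotheses of Laplace's method (Lemma~\ref{lem:Laplace}) are actually met by the integrand $p_{n,\gamma}^\star(y\vert z)\pi^\star(z)$: one must combine smoothness of $\pi^\star$ at $y$ (from Proposition~\ref{thm:regularity}), nondegeneracy of $\nabla_z^2 \ell(z,y)$ (from Assumption~\ref{ass:UniquePrior}(a)), and integrability of the dual prior (from Assumption~\ref{ass:Base}) into a coherent application. Once these are aligned, the $\limsup$ bound is automatic.
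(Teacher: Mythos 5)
Your proposal is correct and follows essentially the same route as the paper: reduce the claim to $\limsup_{\gamma\to0^+}\mathcal{Z}_{n,\gamma}<\infty$, invoke Proposition~\ref{thm:regularity} for smoothness of $\pi^\star$, rewrite $\mathcal{Z}_{n,\gamma}$ via Proposition~\ref{thm:DR}, and then apply the cold-posterior Laplace step from the proof of Theorem~\ref{thm:IIC} to get $\mathcal{Z}_{n,\gamma}=\pi^\star(y)[1+\mathcal{O}(\gamma)]$. Your added bookkeeping (that the smoothness/full-rank clauses are immediate from Condition~\ref{ass:GlobalReg}, that $y\in F(\Theta)$ since $\mathcal{M}\neq\emptyset$, and that Assumption~\ref{ass:UniquePrior}(a) supplies the Hessian nondegeneracy) simply makes explicit what the paper leaves implicit.
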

\begin{proof}
From Proposition \ref{thm:regularity}, it will suffice to show that $\limsup_{\gamma \to 0^+} \mathcal{Z}_{n,\gamma} < +\infty$ whenever $\pi^\star$ is smooth. By Proposition \ref{thm:DR}, $\mathcal{Z}_{n,\gamma} = \mathcal{Z}_{n,\gamma}^\star$ where
\[
\mathcal{Z}_{n,\gamma}^\star = \int_{\mathbb{R}^{mn}} p_{n,\gamma}^\star(y\vert z) \pi^\star(z) \dd z. 
\]
Following the steps in the proof of Theorem \ref{thm:IIC}, since $\pi^\star$ is smooth, $\mathcal{Z}_{n,\gamma} = \mathcal{Z}_{n,\gamma}^\star = \pi^\star(y)[1 + \mathcal{O}(\gamma)]$.
\end{proof}

\section{Implicit Regularization of Diagonal Linear Neural Network}
\label{sec:implicit}
We now present an adaptation of the derivation of the implicit regularizer presented in Theorem~1 of \cite{woodworth2020kernel} in terms of the weights themselves.

\begin{proof}[Proof of Lemma \ref{lem:ImpRegDiagNN}]
Letting $\circ$ denote the element-wise Hadamard product, $\boldsymbol{X}=(X,-X)$, and $r(t)=\boldsymbol{X}\theta(t)\circ\theta(t)-y,$ then
$$\theta'(t)=-\nabla\|\boldsymbol{X}\theta(t)\circ\theta(t)-y\|^{2}=-2(\boldsymbol{X}^{\top}r(t))\circ\theta(t).$$
For $w(t)=\theta(t)\circ\theta(t)$ so that $r(t)=\boldsymbol{X}w(t)-y$, $$w'(t)=2\theta'(t)\circ\theta(t)=-4(\boldsymbol{X}^{\top}r(t))\circ w(t),$$
which has a solution of the form
$$w(t)=w(0)\circ\exp\left(-4\boldsymbol{X}^{\top}\int_{0}^{t}r(s)\dd s\right),$$
where the exponential is to be interpreted element-wise. Taking $w^{\ast}=\lim_{t\to\infty}w(t)$ and assuming $w^{\ast}$ achieves a global minimum,$$\boldsymbol{X}w^{\ast} = y, \qquad w^{\ast}=B(\boldsymbol{X}^{\top}\nu),$$
where $B(z)=w(0)\circ\exp(z)$ and $\nu=-4\int_{0}^{\infty}r(s)\dd s$. The KKT optimality conditions for the problem $$w^{\ast}=\argmin_{w\in [0,\infty)^{2d}}Q(w)\text{ such that }\boldsymbol{X}w^{\ast}=y$$
are $\boldsymbol{X}w^{\ast}=y$ and $\nabla Q(w^{\ast})=\boldsymbol{X}^{\top}\nu$  for some multipliers $\nu$. One solution to this problem is $\nabla Q(z)=B^{-1}(z)$, where $B^{-1}$ is the element-wise inverse of $B$, as this implies $\nabla Q(w^{\ast})=\boldsymbol{X}^{\top}\nu$. Note that $B^{-1}(z)=\log(z / w(0))$, where the vector division is again performed element-wise. This implies that $$Q(z)=\sum_{j}z_{j}\left(\log\left(\frac{z_{j}}{w_{j}(0)}\right)-1\right)+C,\qquad\mbox{for some }C>0,$$
which we set to zero. The only critical point of $Q$ occurs at $z=w(0)$, and one can verify that it is a global minimum. This is the regularizer for $w$, which is not one-to-one with $\theta$. To form the implicit regularizer for $\theta$, observe that the gradient flow solution has the same sign as $\theta(0)$. Therefore, $$\theta^{\ast}=\argmin_{\theta \in \mathbb{R}^{2d}}[\tilde{Q}(\theta)+S(\theta)]\text{ subject to }f(\theta,X)=y,$$ where $S(z)=\sum_{j}|\text{sgn}(z_{j})-\text{sgn}(\theta_{j}(0))|$ and $$\tilde{Q}(z)=\sum_{j}z_{j}^{2}\left(2\log|z_{j}|-1-2\log|\theta_{j}(0)|\right).$$
The global minima of $\tilde{Q}$ are located at $(\pm\theta_{j}(0))_{j}$, so the global minimum of $\tilde{Q}+S$ is $\theta(0)$. 
\end{proof}

\end{document}